\algrenewcommand\algorithmicrequire{\textbf{Input:}}
\algrenewcommand\algorithmicensure{\textbf{Initialize:}}
\newtheorem{assumption}{Assumption}
\newtheorem{lemma}{Lemma}
\newtheorem{theorem}{Theorem}
\def\BibTeX{{\rm B\kern-.05em{\sc i\kern-.025em b}\kern-.08em
    T\kern-.1667em\lower.7ex\hbox{E}\kern-.125emX}}
\begin{document}
%
\title{{Distributed} Learning With Sparsified Gradient Differences\thanks{The work by Yicheng Chen and Rick S. Blum is supported by the U. S. Army Research Laboratory and the U. S. Army Research Office under grant number
W911NF-17-1-0331, the National Science
Foundation under Grant ECCS-1744129, and a grant from the Commonwealth of Pennsylvania, Department of Community and Economic Development, through the Pennsylvania Infrastructure Technology Alliance (PITA). The work by Martin Tak\'a\v{c} is partially supported by the U.S. National Science Foundation, under award number CCF-1618717 and CCF-1740796.}}

\author{Yicheng Chen, Rick S. Blum,
\IEEEmembership{Fellow,~IEEE}, Martin Tak\'a\v{c}, and Brian M. Sadler, \IEEEmembership{Life Fellow,~IEEE}
\thanks{
Yicheng Chen, Rick S. Blum, and Martin Tak\'a\v{c}  are with Lehigh University, Bethlehem, PA 18015 USA
(email: yic917@lehigh.edu, rblum@eecs.lehigh.edu, takac@lehigh.edu).}
\thanks{
Brian M. Sadler is with the Army Research Laboratory, Adelphi, MD 20783 USA (email: brian.m.sadler6.civ@army.mil).}}
\maketitle

\begin{abstract}
A very large number of communications are typically required to solve distributed learning tasks, {and this critically limits scalability and convergence speed in wireless communications applications.}
In this paper, we devise a \textcolor[rgb]{0.00,0.00,0.00}{\textbf{G}radient \textbf{D}escent method with \textbf{S}parsification and \textbf{E}rror \textbf{C}orrection (\textbf{GD-SEC})} to improve the communications efficiency
{in a general worker-server architecture}.
Motivated by a variety of wireless communications learning scenarios,
\textcolor[rgb]{0.00,0.00,0.00}{{GD-SEC} reduces the number of bits per communication from worker to server with no degradation in the order of the convergence rate. This enables larger scale model learning without sacrificing convergence or accuracy.}
At each iteration of \textcolor[rgb]{0.00,0.00,0.00}{GD-SEC}, instead of directly transmitting the \textcolor[rgb]{0.00,0.00,0.00}{entire} gradient vector, each worker computes the difference between its current gradient and a linear combination of its previously transmitted gradients, and then transmits the sparsified gradient difference to the server. A key feature of \textcolor[rgb]{0.00,0.00,0.00}{GD-SEC} is that \textcolor[rgb]{0.00,0.00,0.00}{any given} component of the gradient difference vector will not be transmitted if its magnitude is not \textcolor[rgb]{0.00,0.00,0.00}{sufficiently} large. An error correction technique is used at each worker to compensate \textcolor[rgb]{0.00,0.00,0.00}{for} the error resulting from sparsification.
We prove that \textcolor[rgb]{0.00,0.00,0.00}{GD-SEC} is guaranteed to converge for strongly convex, convex, and nonconvex optimization problems \textcolor[rgb]{0.00,0.00,0.00}{with the same order of convergence rate as GD}.  Furthermore, if the objective function is strongly convex, \textcolor[rgb]{0.00,0.00,0.00}{GD-SEC} has a fast linear convergence rate. \textcolor[rgb]{0.00,0.00,0.00}{Numerical results not only validate the convergence rate of GD-SEC but also explore the communication bit savings it provides.} Given a target accuracy, \textcolor[rgb]{0.00,0.00,0.00}{GD-SEC} can significantly reduce the \textcolor[rgb]{0.00,0.00,0.00}{communications load} \textcolor[rgb]{0.00,0.00,0.00}{compared to the best existing algorithms} without slowing down the optimization process.
\end{abstract}

\begin{IEEEkeywords}
Communication-efficient, distributed learning,
error correction, gradient compression, sparsification, wireless communications, worker-server architecture.
\end{IEEEkeywords}

%
\IEEEpeerreviewmaketitle

\section{Introduction}



In recent years, there has been a surge in research and development efforts in optimization and machine
learning with the objective to optimize a performance criterion employing available data. However, traditional
centralized optimization algorithms for machine learning applications have a number of deficiencies in terms of computational resources and privacy concerns. \textcolor[rgb]{0.00,0.00,0.00}{This has motivated the study of distributed optimization algorithms}. The basic idea is to parallelize the processing load across multiple computing devices (\textcolor[rgb]{0.00,0.00,0.00}{aka} {\em workers}) where only the local processed outputs of each device are transmitted \textcolor[rgb]{0.00,0.00,0.00}{to a central server}. \textcolor[rgb]{0.00,0.00,0.00}{This paper considers a distributed optimization problem given by
\begin{align}\label{basicprobNSF2020}
\min _{\boldsymbol{\theta} \in \mathbb{R}^{d}} f(\boldsymbol{\theta}) \quad \text { with } \quad f(\boldsymbol{\theta}) :=\sum_{m \in \mathcal{M}} f_{m}(\boldsymbol{\theta}),
\end{align}
where each worker $m\in \mathcal{M}=\{1,2,...,M\}$ holds a private local function $f_{m}(\boldsymbol{\theta})$, and all workers share a globally consistent model parameter vector $\boldsymbol{\theta}\in \mathbb{R}^{d}$ which needs to be optimized. Instead of minimizing the communication overhead by slowing down the optimization process, our goal is to develop a communication-efficient distributed optimization algorithm such that all workers can obtain an optimal solution $\boldsymbol{\theta}^*$ with the same order of the convergence rate as the  classical  \textcolor[rgb]{0.00,0.00,0.00}{gradient descent (GD)} while significantly reducing the number of communication bits sent to the central server. This  is  especially  important  in  a  wireless  communication setting,  where  bandwidth  is  limited \cite{song2021federated,chen2021fedsvrg}. }

The distributed optimization problem in (\ref{basicprobNSF2020}) appears in various scenarios {that typically rely on wireless communications, so that latency, scalability, and privacy are fundamental challenges.  Applications include}
multi-agent systems \cite{bedi2019asynchronous,nedic2009distributed, bullo2009distributed,cao2012overview}, sensor networks \cite{rabbat2004distributed}, smart grids \cite{ liu2017distributed}, and distributed learning \cite{lin2017deep, konevcny2016federated, lian2017can}. 
In a distributed learning scenario, the local function $f_{m}(\boldsymbol{\theta}) $ at worker $m$ is a sum of loss functions $\ell(\boldsymbol{\theta};\mathbf{x}_n,y_n)$ for $n=1,2,...,N_m$ where $\mathbf{x}_n$ is the $n$-th feature vector and $y_n$ is the corresponding label. Various distributed optimization algorithms have been developed to solve this problem, and one {primary} method is to \textcolor[rgb]{0.00,0.00,0.00}{employ} classical GD in  a worker-server architecture setting where the central server \textcolor[rgb]{0.00,0.00,0.00}{uses} GD to update the model parameters after aggregating the gradients received from all workers, and then \textcolor[rgb]{0.00,0.00,0.00}{broadcasts}
the updated parameter back to the workers. However, overall performance is limited in practice by the fact that each worker has to transmit \textcolor[rgb]{0.00,0.00,0.00}{the entire gradient vector} at each iteration. \textcolor[rgb]{0.00,0.00,0.00}{In many problems the gradient vector may be very large, for example, on} the order of $200$ MB at each iteration \textcolor[rgb]{0.00,0.00,0.00}{when} training a deep neural network such as AlexNet \cite{alistarh2018convergence}.

Consider machine learning models where the dimension of the gradient vector is large. In wireless settings, each worker needs to \textcolor[rgb]{0.00,0.00,0.00}{load} the resulting vector into \textcolor[rgb]{0.00,0.00,0.00}{packets} for transmission. This communication step can easily become a significant bottleneck on overall performance, especially in federated learning and edge AI systems \cite{bekkerman2011scaling, shalf2010exascale, mcmahan2016communication, smith2017federated, stoica2017berkeley}. In this paper, we show empirically that the amount of transmitted bits per communication, which is approximately proportional to the number of transmitted packets, can be significantly reduced through sparisification {that} restricts each worker from transmitting less informative components of the gradient vector. The resulting algorithm, termed \textcolor[rgb]{0.00,0.00,0.00}{\textbf{G}radient \textbf{D}escent method with \textbf{S}parsification and \textbf{E}rror \textbf{C}orrection (\textbf{GD-SEC}), can reach a given objective error with much fewer communication bits compared to the classical GD while achieving the same order of convergence rate as GD}. Rigorous theoretical analysis is provided to guarantee the convergence of \textcolor[rgb]{0.00,0.00,0.00}{GD-SEC} for strongly convex, convex, and non-convex cases.

\subsection{Related Work}

\textcolor[rgb]{0.00,0.00,0.00}{There has been a surge of research for solving large-scale learning tasks like (\ref{basicprobNSF2020}), where stochastic gradient descent (SGD), a modification of the fundamental GD, is a popular method due to its low computational complexity at each iteration.  It is worth noting that the convergence rate of SGD can be inferior to that of GD, especially for a strongly convex objective function \cite{nguyen2017sarah}. For example, the convergence rate of SGD is sublinear whereas that of GD is linear under the strong convex assumption. SGD is also sensitive to the variance in the sample gradients.} Recently a class of variance reduction techniques \cite{roux2012stochastic,defazio2014saga,johnson2013accelerating,nguyen2017sarah,xiao2014proximal} have been proposed to reduce variance in the sample gradients while enjoying a faster convergence rate. In these variance reduction techniques two loops are usually needed where the outer loop computes \textcolor[rgb]{0.00,0.00,0.00}{a gradient using the entire dataset and the inner loop computes a gradient using a randomly selected sample from the entire dataset.} However, the existing work \cite{roux2012stochastic,defazio2014saga,johnson2013accelerating,nguyen2017sarah,xiao2014proximal} only reduces per-iteration complexity instead of saving communications, which is a bottleneck in distributed learning {with wireless communications}.

\textcolor[rgb]{0.00,0.00,0.00}{{In this paper we focus on a distributed GD method, called GD-SEC, where communication bit requirements  are also significantly reduced.} Different from a popular distributed algorithm called alternating direction method of multipliers (ADMM) \cite{liu2019communication,boyd2011distributed} where each worker has expensive computation cost to directly find the solution of the augmented Lagrangian function at each iteration\footnote{ The solution usually requires many GD steps to find}, the computation cost for GD-SEC is very cheap where only one GD step is needed at each round. Compared to inexact ADMM \cite{zhou2021communication,chang2014multi} where
two vectors (primal and dual) must be transmitted to the server, each worker at each iteration in GD-SEC only needs to transmit one sparsified vector, which leads to significant communication bit reduction.  }

The communication bottleneck \textcolor[rgb]{0.00,0.00,0.00}{problem} in distributed optimization has attracted significant attention in recent years. \textcolor[rgb]{0.00,0.00,0.00}{An efficient solution {should} significantly} reduce the total number of bits used in a given transmission {without impacting rapid convergence to a good estimate of $\boldsymbol{\theta}$.}
In order to provide an end-to-end speedup, a list of properties that compression methods should satisfy are proposed in \cite{agarwal2021utility}.
\textcolor[rgb]{0.00,0.00,0.00}{Quantization is a popular data compression approach which aims to approximate some quantity using a smaller number of bits to simplify processing, storage and analysis. One-bit stochastic GD (SGD) has been proposed in training deep neural networks \cite{seide20141} and multi-agent networks \cite{zhang2018distributed}. Multi-bit quantized SGD was developed where nodes can adjust the number of bits sent per iteration at the cost of possibly higher variance \cite{alistarh2017qsgd}. Variance reduction for arbitrary quantized updates has been considered to improve the convergence rate \cite{horvath2019stochastic}. A quantized decentralized GD algorithm was developed where nodes update their local decision variables by combining the quantized information received from their neighbors with their local information \cite{reisizadeh2019exact}. Quantized ADMM was proposed for decentralized machine learning \cite{elgabli2019q}. \textcolor[rgb]{0.00,0.00,0.00}{These} existing efforts on quantization \cite{seide20141, zhang2018distributed, alistarh2017qsgd, horvath2019stochastic, reisizadeh2019exact} \textcolor[rgb]{0.00,0.00,0.00}{are} very complimentary to the algorithm we consider in this paper, such that our algorithm {GD-SEC} can be combined with quantization to further improve the compression.}

{For the purpose of this paper, \textcolor[rgb]{0.00,0.00,0.00}{\textit{sparsification}}  refers to a data compression approach in machine learning where each worker sets small magnitude gradient components to zero and only transmits the remaining non-zero components. 
{This has} been shown to have significant potential in approaches that always send a fixed number of non-zero components, for example \cite{alistarh2018convergence, lin2017deep,aji2017sparse, stich2018sparsified}. Recent work in \cite{lin2017deep}  has demonstrated that the fixed number of non-zero components of each transmitted vector can be reduced by more than $600$ times through sparsification without loss of accuracy in training large-scale neural networks. Sparse communication for distributed GD was developed where $99\%$ of all entries were mapped to zero while still attaining the desired accuracy \cite{aji2017sparse} when a fixed number of components are transmitted. The general idea of setting components to zero builds on work from many different areas, including, to give a few example papers, work in statistical signal processing \cite{bryan2013making, donoho2006compressed}, matrix approximation \cite{boas2016shrinkage,stewart1993early}, distributed DNN training \cite{dutta2020discrepancy,strom2015scalable} and \textcolor[rgb]{0.00,0.00,0.00}{others}.} 
Local error correction is {also} widely used with sparsification {and significantly improves} accuracy  \cite{xu2021grace,stich2018sparsified}.

{Some recent theoretical analysis is leading to better understanding of 
when the impressive observed performance of sparsification can be guaranteed.} 
Sparsifying gradients while incorporating local error correction has been shown to provide convergence guarantees \cite{alistarh2018convergence, stich2018sparsified}.  \textcolor[rgb]{0.00,0.00,0.00}{Our GD-SEC algorithm builds on these ideas. However, in contrast, GD-SEC adaptively selects the number of the non-zero components for transmission at each iteration, differing from methods that fix the total number of non-zero components per transmission in advance \cite{alistarh2018convergence, stich2018sparsified,basu2020qsparse}. 
Furthermore, in our method each worker does not directly transmit the sparsified local gradient to the server which we found can have convergence issues\footnote{\textcolor[rgb]{0.00,0.00,0.00}{These numerical experiments are not shown} due to space limitations}.  Instead, each worker maintains a state variable to smooth its past transmitted gradients, and sparsifies the difference between its current gradient and the state variable. Numerical results given later show the advantages of our approach.}

{Reducing} the number of transmissions per iteration is also an efficient method to resolve the communication bottleneck \textcolor[rgb]{0.00,0.00,0.00}{problem}. Several researchers \cite{rago1996censoring, appadwedula2005energy, marano2006cross, patwari2003hierarchical} have studied a mode of operation called censoring, where workers will transmit only highly informative data, thus reducing worker communications. In the original work which was focused on hypothesis testing \cite{rago1996censoring, appadwedula2005energy, marano2006cross, patwari2003hierarchical}, workers transmit only very large or small likelihood ratios that contain significant information about which hypothesis is most likely to be true. For censoring-based distributed optimization, a worker will not transmit unless its local \textcolor[rgb]{0.00,0.00,0.00}{update} is sufficiently different from the previously transmitted one. In \cite{chen2018lag}, gradient descent (GD) with censoring was proposed, while  \cite{liu2019communication} proposed a communication-censored ADMM approach which restricts each worker from transmitting its local solution to neighbors if this variable is not sufficiently different from the previously transmitted one. \textcolor[rgb]{0.00,0.00,0.00}{However, in \cite{chen2018lag, liu2019communication}, workers either transmit the entire vector or they do not transmit. 
In contrast, the proposed \textcolor[rgb]{0.00,0.00,0.00}{GD-SEC} method reduces the bit-load by sparsifying the transmitted vectors and \textcolor[rgb]{0.00,0.00,0.00}{consequently GD-SEC can be} combined with censoring.}


\subsection{Our Contributions, Notation and Paper Organization}
\textcolor[rgb]{0.00,0.00,0.00}{In this paper, we develop \textcolor[rgb]{0.00,0.00,0.00}{a} {GD-SEC} method for distributed learning in the worker-server architecture. The three key ingredients of {GD-SEC} are 1) two types of state variables: one maintained at each worker to locally smooth the past transmitted gradients such that each worker’s gradient points in a direction of persistent descent; one maintained at the server to help approximate the sum of the current local gradients of workers when some components of their transmitted vectors have been suppressed, 2) a component-wise communication-censoring strategy \textcolor[rgb]{0.00,0.00,0.00}{such that a worker will not} transmit a component of its gradient vector if this component does not have sufficiently novel \textcolor[rgb]{0.00,0.00,0.00}{information, thereby reducing} the number of transmitted bits, 3) error correction compensation to mitigate the impact of the components which were not transmitted. We derive a rigorous theoretical convergence guarantee for {GD-SEC} for strongly convex, convex and nonconvex cases which shows that GD-SEC achieves the same order convergence rate as GD while saving communications.  We also numerically show the significant communication savings achieved by {GD-SEC}. We demonstrate that {GD-SEC} can achieve a linear convergence rate for a smooth and strongly convex objective function while dramatically reducing the number of transmitted bits. Numerical experimental results employing both synthetic and real datasets show that given a target objective error, {GD-SEC} is able to significantly reduce the total number of bits transmitted to the server, while maintaining a fast convergence rate compared to other distributed learning algorithms. \textcolor[rgb]{0.00,0.00,0.00}{Finally, we consider stochastic and quantized extensions of GD-SEC, and initial numerical experiments are promising}
}

Throughout this paper, we use bold lower case letters to denote column vectors. For any vector $\mathbf{z}\in \mathbb{R}^{d}$, we use $[\mathbf{z}]_i$ to denote the $i$-th component/entry of $\mathbf{z}$ with its magnitude being denoted as $|[\mathbf{z}]_i|$. We use $\|\bf{z}\|$ and $\bf{z^{\top}}$ to denote the  $\ell_2$-norm and the transpose of a column vector $\bf{z}$, respectively.

The remainder of the paper is organized as follows. In Section \ref{describeAlgorithm}, we formally describe the proposed \textcolor[rgb]{0.00,0.00,0.00}{GD-SEC} algorithm. The convergence guarantees for strongly convex, convex, and nonconvex cases are derived in Section \ref{ConvergenceGuar} with the detailed proofs developed in the appendix. In Section \ref{NumericalResult} we provide numerical experiments to illustrate the advantages of \textcolor[rgb]{0.00,0.00,0.00}{GD-SEC}. Finally, Section \ref{Conclusion} provides our conclusions.

%
%
%
%

\section{Algorithm Development}\label{describeAlgorithm}

 \textcolor[rgb]{0.00,0.00,0.00}{In this section, we describe the GD-SEC distributed algorithm so that all workers can obtain a GD solution to  the optimization problem given by (\ref{basicprobNSF2020}) with the same order of convergence rate as the classical GD while significantly reducing the number of communication bits sent to the central server.} This proposed algorithm is suitable \textcolor[rgb]{0.00,0.00,0.00}{for} large-scale distributed learning where communication can easily become \textcolor[rgb]{0.00,0.00,0.00}{a} bottleneck \textcolor[rgb]{0.00,0.00,0.00}{on} the overall performance. Before developing \textcolor[rgb]{0.00,0.00,0.00}{GD-SEC}, we first discuss the communication model and several specific strategies used in this paper.

\subsection{Communication Model and Saving Strategies}

At the start of each iteration, the server broadcasts the entire model parameter vector to all workers. Each worker will \textcolor[rgb]{0.00,0.00,0.00}{then} transmit \textcolor[rgb]{0.00,0.00,0.00}{its} sparsified vector to the server if some components have sufficiently novel information. \textcolor[rgb]{0.00,0.00,0.00}{GD-SEC} is employed to reduce the number of bits per update \textcolor[rgb]{0.00,0.00,0.00}{in} the worker-to-server uplink. In many cases the uplink (worker-to-server) may be much slower than the downlink (server-to-worker) \cite{kairouz2019advances}, and workers may have limited energy {so conserving communication resources is critical}. Unlike classical GD, in \textcolor[rgb]{0.00,0.00,0.00}{GD-SEC}, each worker does not necessarily transmit the entire vector to the server. Instead, workers transmit a sparsified vector if some components of the local \textcolor[rgb]{0.00,0.00,0.00}{update} are not sufficiently novel compared to the previous iterations, {as defined in Subsection \ref{detailedASGDdevelop}.}

We consider three specific strategies to improve the communication efficiency of classical GD.

\textcolor[rgb]{0.00,0.00,0.00}{
$\bullet$  \textbf{Adaptive sparsification}. Unlike classical sparsification methods that keep the top-$j$ components {ranked by absolute value} (with predefined $j$) while mapping the other components to zero (at each iteration), our sparsification strategy is dynamic and we do not fix the total number of components to suppress. At each iteration, components of a given worker gradient vector will \textcolor[rgb]{0.00,0.00,0.00}{be transmitted} if their values are significantly different from past values at this worker, {but otherwise these \textcolor[rgb]{0.00,0.00,0.00}{components} will be suppressed since the server can use the past transmissions to approximate the current one}. Note that the entire transmitted vector will be suppressed if none of {the} components have sufficiently novel information. {
Our adaptive sparsification is much lower complexity than the top-$j$ approaches, \textcolor[rgb]{0.00,0.00,0.00}{and} we show our performance is better than a representative top-$j$ approach in our numerical results. \textcolor[rgb]{0.00,0.00,0.00}{We remark that we carried out numerical comparisons with various choices of $j$ (not shown due to space limitations), and adapting $j$ may be a better bit-saving strategy than fixing it a priori. If true, this fundamental principle can be used to improve any approach, thus having implications far beyond our particular algorithm.}}
}

\textcolor[rgb]{0.00,0.00,0.00}{
$\bullet$ \textbf{Error correction}. Sparsification generally leads to a 
loss of information about the GD gradient updates,  inducing a bias in the estimation at the server. To reduce the bias, each worker accumulates an approximation to {its} sparsification error from the previous iteration and attempts to correct for this at each iteration. Without error correction, some components {may} be discarded without any adjustment, {whereas error correction enables an approximate adjustment at a later time}.  Intuitively, error correction enables GD-SEC to converge to the optimal solution even with aggressive sparsification. The impact of error correction on GD-SEC is further discussed in Section \ref{errorCorrImpact}.}

\textcolor[rgb]{0.00,0.00,0.00}{
$\bullet$ \textbf{State variables}. 
{Instead of directly sparsifying the local gradient $\nabla f_m({\boldsymbol{\theta}}^{k})$, which includes information the worker has already 
transmitted to the server, it is far better to 
first attempt to eliminate from consideration the information the worker has already sent to the 
server.  \textcolor[rgb]{0.00,0.00,0.00}{This is similar in spirit to classical differential plus-code modulation in communications.}  
Our state variable $\boldsymbol h_m^k$ at worker $m$ \textcolor[rgb]{0.00,0.00,0.00}{expresses} the information at worker $m$ that the server already knows. \textcolor[rgb]{0.00,0.00,0.00}{Consequently}, worker $m$ sparsifies the difference $\nabla f_m({\boldsymbol{\theta}}^{k})-\boldsymbol h_m^k$ \textcolor[rgb]{0.00,0.00,0.00}{so that the update will include new information}. 
\textcolor[rgb]{0.00,0.00,0.00}{This approach is shown to converge, and yeilds significant savings in transmitted bits in our numerical experiments in Section \ref{effectofbeta}.} We also employ a state variable at the server \textcolor[rgb]{0.00,0.00,0.00}{that aids in maintaining the overall descent direction; see also the discussion after (\ref{explainstatevariab})}. 
}}

\textcolor[rgb]{0.00,0.00,0.00}{Next we formally describe the {implementation of the GD-SEC} algorithm using these three strategies.}

\subsection{\textcolor[rgb]{0.00,0.00,0.00}{Implementation of GD-SEC}}\label{detailedASGDdevelop}

\begin{savenotes}
\begin{algorithm}
\caption{GD-SEC}\label{varianceReduceOGD}
\begin{algorithmic}[1]
\Require{step size $\alpha$, constants $\beta\in(0,1]$ and $\{\xi_i\}_{i=1}^d$ with $\xi_i>0$.}
\Ensure{${\boldsymbol{\theta}}^0,{\boldsymbol{\theta}}^1\in\mathbb{R}^{d}$, ${\boldsymbol{h}}_1^1,...,{\boldsymbol{h}}_M^1 \in\mathbb{R}^{d}$, ${\boldsymbol{e}}_1^1,...,{\boldsymbol{e}}_M^1 ={\boldsymbol{0}}$, and ${\boldsymbol{h}}^1=\sum_{m=1}^M{\boldsymbol{h}}_m^1$.}
\For{$k=1, 2,..., K$}
\State Server broadcasts $\boldsymbol{\theta}^k$ to all workers at the start of iteration $k$.
\For{$m=1, 2,..., M$}
\State Worker $m$ computes $\nabla f_m({\boldsymbol{\theta}}^k)$, sets $\Delta_m^k=\nabla f_m({\boldsymbol{\theta}}^k)-{\boldsymbol{h}}_m^k+{\boldsymbol{e}}_m^k$ and sets $\hat{\Delta}_m^k=\Delta_m^k$.
\For{$i=1, 2,..., d$}
\If {$|[\Delta_m^k]_{i}|\le \frac{\xi_i}{M} |[{\boldsymbol{\theta}}^k-{\boldsymbol{\theta}}^{k-1}]_{i}|$}
\State Worker $m$ sets $[\hat{\Delta}_m^k]_{i}=0$.
\EndIf
\EndFor
\If {$\|\hat{\Delta}_m^k\|\ne0$}
\State Worker $m$ transmits $\hat{\Delta}_m^k$ back to the server.
\Else
\State Worker $m$ transmits \textcolor[rgb]{0.00,0.00,0.00}{nothing}.
\EndIf
\State Worker $m$ updates ${\boldsymbol{h}}_m^{k+1}={\boldsymbol{h}}_m^{k}+\beta \hat{\Delta}_m^k$ and ${\boldsymbol{e}}_m^{k+1}= \Delta_m^k-\hat{\Delta}_m^k$.
\EndFor
\State Server computes $\hat\Delta^k =  \sum_{m=1}^M\hat{\Delta}_m^k$ with $\hat{\Delta}_m^k=\boldsymbol{0}$ if \textcolor[rgb]{0.00,0.00,0.00}{nothing} from worker $m$ is received.
\State Server updates ${\boldsymbol{\theta}}^{k+1}={\boldsymbol{\theta}}^{k} -\alpha({\boldsymbol{h}}^k + \hat\Delta^k )$.
\State Server updates ${{\boldsymbol h}^{k+1}} = {{\boldsymbol h}^{k}} + \beta\hat\Delta^k$.
\EndFor
\end{algorithmic}
\end{algorithm}
\end{savenotes}

\textcolor[rgb]{0.00,0.00,0.00}{GD-SEC} is summarized as Algorithm \ref{varianceReduceOGD}.
  At iteration $k$, worker $m$ ($m=1,2,..,M$) maintains 3 vectors. The first is a parameter vector $\boldsymbol{\theta}^k$ which is received from the server at the start of iteration $k$, the second is a state variable ${\boldsymbol{h}}_m^k$ that accumulates all the information transmitted up to iteration $(k-1)$, \textcolor[rgb]{0.00,0.00,0.00}{and the third is the error vector ${\boldsymbol{e}}_m^{k}= \Delta_m^{k-1}-\hat{\Delta}_m^{k-1}=\nabla f_m({\boldsymbol{\theta}}^{k-1})-{\boldsymbol{h}}_m^{k-1}+{\boldsymbol{e}}_m^{k-1}$ which accumulates the error from the previous iteration.} An important feature of \textcolor[rgb]{0.00,0.00,0.00}{GD-SEC} is that worker $m$ will not transmit the $i$-th component of the difference $\Delta_m^k$
if the following {\bf{\textcolor[rgb]{0.00,0.00,0.00}{GD-SEC  transmission stopping condition}}} is satisfied
\begin{align}\label{ASGD-stopping condition} |[\Delta_m^k]_{i}|\le \frac{\xi_i}{M} |[{\boldsymbol{\theta}}^k-{\boldsymbol{\theta}}^{k-1}]_{i}|,
\end{align}
where $\xi_i$ is a constant {for $i=1,2,...,d$}, and
$\Delta_m^k:=\nabla f_m({\boldsymbol{\theta}}^k)-{\boldsymbol{h}}_m^k+{\boldsymbol{e}}_m^k$. \textcolor[rgb]{0.00,0.00,0.00}{Different from a hard-threshold sparsifier which is usually suboptimal, the threshold in (\ref{ASGD-stopping condition}) is adaptively changing at each iteration}. 
Note that worker $m$ will suppress the entire transmitted vector if all components of $\Delta_m^k$ satisfy (\ref{ASGD-stopping condition}). Otherwise, worker $m$ transmits the sparsified vector $\hat{\Delta}_m^k$ to the server where \textcolor[rgb]{0.00,0.00,0.00}{for all $i=1,2,...,d$
\begin{align}\label{eq:alpha}
[\hat{\Delta}_m^k]_i :=\left\{
           \begin{array}{lcl}
    \quad 0,\quad   &\mbox{if (\ref{ASGD-stopping condition}) is true}  \\
    {[{\Delta}_m^k]_i},\quad  &\mbox{\textcolor[rgb]{0.00,0.00,0.00}{otherwise}}.
           \end{array}
        \right.
\end{align}}Then worker $m$ locally updates its state variable ${\boldsymbol{h}}_m^{k+1}={\boldsymbol{h}}_m^{k}+\beta \hat{\Delta}_m^k$ with $\beta\in(0,1]$ and records its error vector ${\boldsymbol{e}}_m^{k+1}= \Delta_m^k-\hat{\Delta}_m^k$.  Note that the error ${\boldsymbol{e}}_m^{k+1}$ will be added to $\nabla f_m({\boldsymbol{\theta}}^{k+1})$ before sparsification at iteration $(k+1)$. 

In order to intuitively \textcolor[rgb]{0.00,0.00,0.00}{understand} the impact of $\beta$ on ${\boldsymbol{h}}_m^{k+1}$ with $\beta\in(0,1]$, suppose all components are transmitted during the first $k$ iterations. \textcolor[rgb]{0.00,0.00,0.00}{Then,} we obtain $\hat{\Delta}_m^k={\Delta}_m^k=\nabla f_m({\boldsymbol{\theta}}^k)-{\boldsymbol{h}}_m^k$
and can rewrite ${\boldsymbol{h}}_m^{k+1}$ as
\begin{align}
{\boldsymbol{h}}_m^{k+1}&={\boldsymbol{h}}_m^{k}+\beta(\nabla f_m({\boldsymbol{\theta}}^k)-{\boldsymbol{h}}_m^k)\label{iteratRela}\\
&=(1-\beta)^k{\boldsymbol{h}}_m^{1} + \sum_{j=1}^{k}(1-\beta)^{k-j}\beta\nabla f_m({\boldsymbol{\theta}}^j),\label{explainstatevariab}
\end{align}
\textcolor[rgb]{0.00,0.00,0.00}{where (\ref{explainstatevariab}) is obtained by recursively applying (\ref{iteratRela}) on ${\boldsymbol{h}}_m^{k}$, ${\boldsymbol{h}}_m^{k-1}$,..., ${\boldsymbol{h}}_m^{1}$. The result in (\ref{explainstatevariab}) indicates that the state variable ${\boldsymbol{h}}_m^{k+1}$ at worker $m$ is a linear combination of the first state variable ${\boldsymbol{h}}_m^{1}$ and all of its past gradients $\{\nabla f_m({\boldsymbol{\theta}}^j)\}_{j=1}^{k-1}$ which implies that ${\boldsymbol{h}}_m^{k+1}$ can accumulate contributions in directions of persistent (long term) descent of the local function $f_m({\boldsymbol{\theta}})$. To be more precise, if $\beta\ne 1$, 
this makes each worker move in a direction which decreases the function value from a long-term prospective. From (4) ${\boldsymbol{h}}_m^{k}$ reduces to the last transmitted gradient $\nabla f_m({\boldsymbol{\theta}}^{k-1})$ if we set $\beta=1$. Some numerical results in Subsection \ref{effectofbeta} show that accumulating all the  past  gradients (setting $\beta\ne 1$) \textcolor[rgb]{0.00,0.00,0.00}{leads} to better communication savings. Small oscillations are eliminated, and the larger time scale descent direction can be expressed concisely, saving communications bits.   This state variable update strategy in (\ref{explainstatevariab}) seems similar to the momentum term in the heavy ball method, \textcolor[rgb]{0.00,0.00,0.00}{but note that} with sparsification the state variable ${\boldsymbol{h}}_m^{k+1}$ actually accumulates the transmitted sparsified vector $\{\hat{\Delta}_m^j\}_{j=1}^k$. \textcolor[rgb]{0.00,0.00,0.00}{Note also that we employ a state variable at each worker and the server, unlike the classical use of a momentum term only at the server.} }

After receiving $\hat{\Delta}_m^k$ from all workers, the server updates ${\boldsymbol{\theta}}^k$ via the following {\bf {\textcolor[rgb]{0.00,0.00,0.00}{GD-SEC} update rule}}
\begin{align}\label{updateruleSpar}
{\boldsymbol{\theta}}^{k+1}&={\boldsymbol{\theta}}^{k} -\alpha({\boldsymbol{h}}^k + \hat\Delta^k )\notag\\
&\text{ with } {\boldsymbol{h}}^k= \sum_{m=1}^M {\boldsymbol{h}}_m^k \text{ and } \hat\Delta^k=\sum_{m=1}^M \hat\Delta_m^k,
\end{align}
where $\alpha$ is the step size and $({\boldsymbol{h}}_m^k + \hat\Delta_m^k )$ can be regarded as the approximated gradient for worker $m$ at iteration $k$. \textcolor[rgb]{0.00,0.00,0.00}{
{If} all components are transmitted during the first $k$ iterations, (\ref{updateruleSpar}) reduces to the classical gradient descent update rule.   
Finally, the server updates its state variable ${{\boldsymbol h}^{k+1}} = {{\boldsymbol h}^{k}} + \beta\hat\Delta^k$ based on the sparsified $\hat\Delta^k$. 
As will be noted from Algorithm 1, 
the server can compute ${\boldsymbol{h}}^k$ without requiring each worker $m$ to  transmit ${\boldsymbol{h}}_m^k$ in (\ref{updateruleSpar}) since the server and every worker {will update their own state variables} in a similar manner. The implementation of GD-SEC requires each worker be synchronized at each iteration {that, for example,} can be accomplished by using the synchronous federated learning protocols \cite{bonawitz2019towards,sergeev2018horovod}. }



\section{Convergence Analysis}\label{ConvergenceGuar}

In this section, the convergence rate for Algorithm \ref{varianceReduceOGD} is developed. With the proper choice of the constants $\{\xi_i\}_{i=1}^d$, $\beta$ and $\alpha$, the  \textcolor[rgb]{0.00,0.00,0.00}{GD-SEC} method is shown to converge for strongly convex, convex, and nonconvex objective functions while reducing, often significantly, the total number of bits per communication to reach a given objective error. Furthermore, if the local functions are strongly convex or convex, \textcolor[rgb]{0.00,0.00,0.00}{GD-SEC} has a linear or sublinear convergence rate, respectively. \textcolor[rgb]{0.00,0.00,0.00}{Let $\boldsymbol{u_i}=[0,...,0,1,0,...,0]$ with only the $i$-th component being one.}
 \textcolor[rgb]{0.00,0.00,0.00}{The following assumptions enable a rigorous analysis.}
\begin{assumption}\label{OHBassumption1}
{In \emph{(\ref{basicprobNSF2020})}, $f(\boldsymbol{\theta})$ is coordinate-wise $L^i$-smooth and coercive. \textcolor[rgb]{0.00,0.00,0.00}{Coordinate-wise $L^i$-smooth
implies there exists a constant $L^i>0$ such that $|[\nabla f(\boldsymbol{\theta}+t {\boldsymbol{u_i}})- \nabla f(\boldsymbol{\theta})]_i|\le L^i|t|$  for all possible values of {$t$} for all $i=1,2,...,d$.  Coercive implies  $\lim_{\|\boldsymbol{\theta}\|\rightarrow\infty}f(\boldsymbol{\theta})=+\infty$ \emph{\cite{peressini1988mathematics}}.} Similarly, local function $f_m(\boldsymbol{\theta})$ at worker $m$ is coordinate-wise $L_m^i$ smooth. \textcolor[rgb]{0.00,0.00,0.00}{ 
Note that coordinate-wise smoothness of $f(\boldsymbol{\theta})$ and $f_m(\boldsymbol{\theta})$ implies $f(\boldsymbol{\theta})$ and $f_m(\boldsymbol{\theta})$ are smooth with smoothness constants being denoted as $L$ and $L_m$, respectively. To be exact, $f(\boldsymbol{\theta})$ is $L$ smooth implies there exists a constant $L$ such that $\|\nabla f(\boldsymbol{\theta}_1)-\nabla f(\boldsymbol{\theta}_2)\|\le L\|\boldsymbol{\theta}_1-\boldsymbol{\theta}_2\|,\ \forall \ \boldsymbol{\theta}_1, \boldsymbol{\theta}_2$. Similar definition applies to $L_m$.}}
\end{assumption}
\begin{assumption}\label{OHBassumption2}
In \emph{(\ref{basicprobNSF2020})}, $f(\boldsymbol{\theta})$ is $\mu$-strongly convex. This implies there exists a constant $\mu>0$ such that
$f(\boldsymbol{\theta}_1)\ge f(\boldsymbol{\theta}_2) + \nabla f(\boldsymbol{\theta}_2) ^{\top}(\boldsymbol{\theta}_1-\boldsymbol{\theta}_2)  + \frac{\mu}{2}\|\boldsymbol{\theta}_1-\boldsymbol{\theta}_2\|^2, \forall \ \boldsymbol{\theta}_1, \boldsymbol{\theta}_2$ \emph{\cite{nesterov2013introductory}}.
\end{assumption}
\begin{assumption}\label{OHBassumption3}
In \emph{(\ref{basicprobNSF2020})}, $f(\boldsymbol{\theta})$ is convex. This implies $f(\boldsymbol{\theta})$ satisfies $f(\lambda\boldsymbol{\theta}_1 + (1-\lambda)\boldsymbol{\theta}_2)\le \lambda f(\boldsymbol{\theta}_1) + (1-\lambda)f(\boldsymbol{\theta}_2), \forall \ \boldsymbol{\theta}_1, \boldsymbol{\theta}_2$ and $0\le\lambda\le 1$ \emph{\cite{peressini1988mathematics}}.
\end{assumption}

To prove \textcolor[rgb]{0.00,0.00,0.00}{GD-SEC} converges, we employ a Lyapunov function\footnote{\textcolor[rgb]{0.00,0.00,0.00}
The function in (\ref{Laypufunc}) is a Lyapunov function \cite{taylor2018lyapunov} as it is decreasing (shown in Lemma \ref{descentLyapunovfunv}), non-negative for all $\boldsymbol{\theta}^{k}$, equals zero if and only if $\boldsymbol{\theta}^{k}=\boldsymbol{\theta}^{*}$, 
and approaches infinity as $\boldsymbol{\theta}^{k}\rightarrow\infty$. \textcolor[rgb]{0.00,0.00,0.00}{{It is not necessary for} the loss function $f\left(\boldsymbol{\theta}\right)$ to be a Lyapunov function.}}  defined as
\begin{align}\label{Laypufunc}
\mathbb{L}^{k}:&=f\left(\boldsymbol{\theta}^{k}\right)-f\left(\boldsymbol{\theta}^{*}\right)+ {\beta}_1 \left\|\boldsymbol{\theta}^{k}-\boldsymbol{\theta}^{k-1}\right\|^{2}\notag\\
&\quad+ {\beta}_2 \left\|\boldsymbol{\theta}^{k-1}-\boldsymbol{\theta}^{k-2}\right\|^{2},
\end{align}
where $\boldsymbol{\theta}^{*}$ is the optimal solution to the optimization problem in (\ref{basicprobNSF2020}) and $\beta_1$ and $\beta_2$ are non-negative constants that will be specified later. If we set $\beta=\beta_1=\beta_2=0$, $\boldsymbol{h}_m^1=\boldsymbol{0}$ for all $m=1,2,...,M$, and $\xi_i\le0$ for all $i=1,2,...,d$, then \textcolor[rgb]{0.00,0.00,0.00}{GD-SEC} reduces to the classical GD method and $\mathbb{L}^k$ can be used to analyze the GD method. However, when some communications are suppressed and the transmitted vector is sparsified, then the following lemma describes the behavior of the Lyapunov function defined in (\ref{Laypufunc}).

\begin{lemma}\label{descentLyapunovfunv}
Under Assumption \ref{OHBassumption1}, if the constants $\alpha$, $\xi=\max_i\xi_i$, $\beta_1$ and $\beta_2$ are chosen so that
\begin{align}
\gamma&\ge 0\ \mbox{with}\ \gamma:=\frac{L}{2}-\frac{1}{2 \alpha}+\beta_1, \label{etacondition5}\\
\sigma_0&\ge 0 \ \mbox{with}\ \sigma_0:= \frac{\alpha}{2} - \gamma(1+\rho)\alpha^2,\label{DDScondition1}\\
\sigma_1&\ge 0 \ \mbox{with}\notag\\
\sigma_1&:= -\beta_2+\beta_1
 - (1+\rho_2)\xi^2\left( \frac{\alpha}{2} + \gamma(1+\rho^{-1})\alpha^2 \right),\label{condition2}\\
\sigma_2&\ge 0 \ \mbox{with}\notag\\
\sigma_2&:= \beta_2 - (1+\rho_2^{-1})\xi^2\left( \frac{\alpha}{2} + \gamma(1+\rho^{-1})\alpha^2 \right),\label{condition3}
\end{align}
\textcolor[rgb]{0.00,0.00,0.00}{where the constants $\rho>0$ and $\rho_2>0$,}
 then the Lyapunov function follows
\begin{align}\label{nonincreasing}
\mathbb{L}^{k+1}-\mathbb{L}^k &\leq-\sigma_{0}\left\|\nabla f\left(\boldsymbol{\theta}^{k}\right)\right\|^{2}
-\sigma_{1}\left\|\boldsymbol{\theta}^{k}-\boldsymbol{\theta}^{k-1}\right\|^{2}\notag\\
&-\sigma_{2}\left\|\boldsymbol{\theta}^{k-1}-\boldsymbol{\theta}^{k-2}\right\|^{2},
\end{align}
where constants $\sigma_{0}\ge0$, $\sigma_{1}\ge0$ and $\sigma_{2}\ge0$ depend on $\alpha$, $\xi=\max_i\xi_i$, $\beta_1$ and $\beta_2$. The result in (\ref{nonincreasing}) indicates $\mathbb{L}^{k+1}\le\mathbb{L}^{k}$.
\end{lemma}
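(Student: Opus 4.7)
The plan is to start from an $L$-smoothness descent bound on $f(\boldsymbol{\theta}^{k+1}) - f(\boldsymbol{\theta}^k)$, add the two telescoping $\beta_1$ and $\beta_2$ contributions from $\mathbb{L}^k$ (the constant $f(\boldsymbol{\theta}^*)$ cancels), and then show that the gradient tracking residual $\boldsymbol{r}^k := \nabla f(\boldsymbol{\theta}^k) - (\boldsymbol{h}^k+\hat\Delta^k)$ can be controlled by successive parameter differences using the GD-SEC transmission stopping condition (\ref{ASGD-stopping condition}). First, combining L-smoothness with the update rule $\boldsymbol{\theta}^{k+1}-\boldsymbol{\theta}^k = -\alpha(\boldsymbol{h}^k+\hat\Delta^k)$ yields $f(\boldsymbol{\theta}^{k+1}) - f(\boldsymbol{\theta}^k) \le \nabla f(\boldsymbol{\theta}^k)^\top(\boldsymbol{\theta}^{k+1}-\boldsymbol{\theta}^k) + \tfrac{L}{2}\|\boldsymbol{\theta}^{k+1}-\boldsymbol{\theta}^k\|^2$. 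The key algebraic step is the scaled polarization identity $\nabla f(\boldsymbol{\theta}^k)^\top(\boldsymbol{\theta}^{k+1}-\boldsymbol{\theta}^k) = \tfrac{\alpha}{2}\|\boldsymbol{r}^k\|^2 - \tfrac{\alpha}{2}\|\nabla f(\boldsymbol{\theta}^k)\|^2 - \tfrac{1}{2\alpha}\|\boldsymbol{\theta}^{k+1}-\boldsymbol{\theta}^k\|^2$, which generates the first $-\tfrac{\alpha}{2}\|\nabla f(\boldsymbol{\theta}^k)\|^2$ piece of $\sigma_0$.

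After adding the $\beta_1$ and $\beta_2$ telescoping terms, the coefficient on $\|\boldsymbol{\theta}^{k+1}-\boldsymbol{\theta}^k\|^2$ becomes exactly $\gamma = \tfrac{L}{2} - \tfrac{1}{2\alpha} + \beta_1$, which is non-negative by (\ref{etacondition5}). I would then bound this non-negative term by substituting $\|\boldsymbol{\theta}^{k+1}-\boldsymbol{\theta}^k\|^2 = \alpha^2\|\nabla f(\boldsymbol{\theta}^k)-\boldsymbol{r}^k\|^2$ and applying Young's inequality with parameter $\rho>0$ to get $\gamma\alpha^2[(1+\rho)\|\nabla f(\boldsymbol{\theta}^k)\|^2 + (1+\rho^{-1})\|\boldsymbol{r}^k\|^2]$. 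Combining this with the earlier $-\tfrac{\alpha}{2}\|\nabla f(\boldsymbol{\theta}^k)\|^2$ exactly produces the coefficient $\sigma_0 = \tfrac{\alpha}{2} - \gamma(1+\rho)\alpha^2$ on $\|\nabla f(\boldsymbol{\theta}^k)\|^2$, and the coefficient $\tfrac{\alpha}{2} + \gamma(1+\rho^{-1})\alpha^2$ on $\|\boldsymbol{r}^k\|^2$ that will feed into $\sigma_1$ and $\sigma_2$.

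The crucial step is bounding $\|\boldsymbol{r}^k\|$. Unwinding $\Delta_m^k = \nabla f_m(\boldsymbol{\theta}^k) - \boldsymbol{h}_m^k + \boldsymbol{e}_m^k$ and the error-correction update $\boldsymbol{e}_m^{k+1} = \Delta_m^k - \hat\Delta_m^k$ gives the telescoping identity $\nabla f(\boldsymbol{\theta}^k) - (\boldsymbol{h}^k+\hat\Delta^k) = \sum_{m=1}^M(\boldsymbol{e}_m^{k+1} - \boldsymbol{e}_m^k)$. By (\ref{eq:alpha}), each coordinate $[\boldsymbol{e}_m^{k+1}]_i$ is either zero (when transmitted) or equals $[\Delta_m^k]_i$ obeying (\ref{ASGD-stopping condition}), so $|[\boldsymbol{e}_m^{k+1}]_i| \le \tfrac{\xi_i}{M}|[\boldsymbol{\theta}^k-\boldsymbol{\theta}^{k-1}]_i|$. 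Summing coordinates and using $\xi = \max_i\xi_i$ yields $\|\boldsymbol{e}_m^{k+1}\|\le \tfrac{\xi}{M}\|\boldsymbol{\theta}^k-\boldsymbol{\theta}^{k-1}\|$; the triangle inequality across $m$ then gives $\|\boldsymbol{r}^k\|\le \xi(\|\boldsymbol{\theta}^k-\boldsymbol{\theta}^{k-1}\| + \|\boldsymbol{\theta}^{k-1}-\boldsymbol{\theta}^{k-2}\|)$. A final Young's inequality with parameter $\rho_2>0$ splits $\|\boldsymbol{r}^k\|^2$ into $(1+\rho_2)\xi^2\|\boldsymbol{\theta}^k-\boldsymbol{\theta}^{k-1}\|^2 + (1+\rho_2^{-1})\xi^2\|\boldsymbol{\theta}^{k-1}-\boldsymbol{\theta}^{k-2}\|^2$, which combines with the $(\beta_2-\beta_1)\|\boldsymbol{\theta}^k-\boldsymbol{\theta}^{k-1}\|^2$ and $-\beta_2\|\boldsymbol{\theta}^{k-1}-\boldsymbol{\theta}^{k-2}\|^2$ contributions from the telescoping to assemble exactly into $-\sigma_1\|\boldsymbol{\theta}^k-\boldsymbol{\theta}^{k-1}\|^2 - \sigma_2\|\boldsymbol{\theta}^{k-1}-\boldsymbol{\theta}^{k-2}\|^2$.

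The main obstacle I expect is the per-worker error bound: converting the coordinatewise stopping condition into a clean vector bound on $\|\boldsymbol{e}_m^{k+1}\|$ that is consistent across all iterations, since the bound must hold whether the whole vector is suppressed, only a subset of its entries is suppressed, or nothing is suppressed. Taking $\xi = \max_i\xi_i$ gives up some tightness but is what makes the downstream telescoping clean and allows $\sigma_1,\sigma_2$ to be stated in terms of a single scalar $\xi$. Beyond this, the remaining work is a careful bookkeeping exercise to match the coefficients (\ref{DDScondition1})--(\ref{condition3}), and the non-negativity of $\sigma_0,\sigma_1,\sigma_2$ immediately delivers $\mathbb{L}^{k+1}\le\mathbb{L}^k$.
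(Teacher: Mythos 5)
Your proposal is correct and follows essentially the same route as the paper: an $L$-smoothness bound combined with the polarization identity (the paper's auxiliary descent lemma), Young's inequality twice with parameters $\rho$ and $\rho_2$, and the stopping condition (\ref{ASGD-stopping condition}) applied to the suppressed components $\hat{\Delta}_m^k-\Delta_m^k=-\boldsymbol{e}_m^{k+1}$ and to $\boldsymbol{e}_m^k$ to produce exactly the coefficients $\sigma_0,\sigma_1,\sigma_2$ in (\ref{DDScondition1})--(\ref{condition3}). The only cosmetic difference is that you bound the residual by triangle inequality across workers before splitting with $\rho_2$, whereas the paper splits the vector sum first and then bounds coordinate-wise; both yield the identical estimate.
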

The detailed proof of Lemma \ref{descentLyapunovfunv} is provided in Appendix \ref{decreaseproperty}. If we set $\big(\beta_1-\frac{1-\alpha L}{2\alpha}\big)=0$, then (\ref{etacondition5})--(\ref{condition3}) are equivalent to
\begin{align}\label{addconstraintLayp}
\alpha&\le\frac{1}{L}, \mbox{and}\ \xi\le\min\Big\{\sqrt{\frac{2(\beta_1-\beta_2)}{(1+\rho_2)\alpha}}, \sqrt{\frac{2\beta_2}{(1+\rho_2^{-1})\alpha}}\Big\}.
\end{align}
Note that we can use (\ref{addconstraintLayp}) to find an uncountable number of parameters $\alpha$ and $\xi$ which would guarantee (\ref{etacondition5})--(\ref{condition3}) and some examples of parameter choices can be found in Appendix \ref{decreaseproperty}. In the following, we will show that the Lyapunov function in (\ref{Laypufunc}) has a linear convergence rate under certain conditions.

\begin{theorem}(\textbf{strongly convex})\label{stonglyconvextheo}
Under \emph{Assumptions \ref{OHBassumption1}} and \emph{\ref{OHBassumption2}}, if constants $\alpha$ and $\xi$ are properly selected such that (\ref{etacondition5})--(\ref{condition3}) are satisfied with $\sigma_0\sigma_1\sigma_2\ne0$, then there exists a constant $c(\alpha,\xi)\in(0,1)$ such that at iteration $k$,
\begin{align}\label{linearconveLay1243}
 \mathbb{L}^{k+1}\le \big(1-c(\alpha,\xi)\big)\mathbb{L}^{k},
\end{align}
where $c(\alpha,\xi)=\min\big\{ 2\sigma_0\mu, \sigma_1/\beta_1, \sigma_2/\beta_2\big\}$ with $\sigma_0$, $\sigma_1$ and $\sigma_2$ defined in (\ref{DDScondition1})--(\ref{condition3}), respectively.
The result in (\ref{linearconveLay1243}) implies
\begin{align}\label{linerratefunctiondecrease}
f(\boldsymbol{\theta}^k) - f\left(\boldsymbol{\theta}^*\right)\le \big(1-c(\alpha,\xi)\big)^{k}\mathbb{L}^0.
\end{align}
\end{theorem}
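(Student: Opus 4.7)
The plan is to bootstrap from Lemma~\ref{descentLyapunovfunv} and convert its per-iteration descent inequality into a contraction on $\mathbb{L}^k$ by matching each of the three negative terms on the right-hand side of \eqref{nonincreasing} with the corresponding term in the definition \eqref{Laypufunc} of the Lyapunov function. Specifically, I would start from
\begin{align*}
\mathbb{L}^{k+1}-\mathbb{L}^{k}
\le -\sigma_{0}\|\nabla f(\boldsymbol{\theta}^{k})\|^{2}
-\sigma_{1}\|\boldsymbol{\theta}^{k}-\boldsymbol{\theta}^{k-1}\|^{2}
-\sigma_{2}\|\boldsymbol{\theta}^{k-1}-\boldsymbol{\theta}^{k-2}\|^{2},
\end{align*}
which holds by Assumption~\ref{OHBassumption1} under the parameter conditions \eqref{etacondition5}--\eqref{condition3}.

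Next, I would invoke $\mu$-strong convexity (Assumption~\ref{OHBassumption2}) via the Polyak--\L{}ojasiewicz inequality, a standard consequence of strong convexity:
\begin{equation*}
\|\nabla f(\boldsymbol{\theta}^{k})\|^{2}\ \ge\ 2\mu\bigl(f(\boldsymbol{\theta}^{k})-f(\boldsymbol{\theta}^{*})\bigr).
\end{equation*}
Substituting this into the descent inequality yields an upper bound on $\mathbb{L}^{k+1}-\mathbb{L}^{k}$ whose three terms are proportional, coefficient-by-coefficient, to the three terms defining $\mathbb{L}^{k}$, namely $f(\boldsymbol{\theta}^{k})-f(\boldsymbol{\theta}^{*})$, $\|\boldsymbol{\theta}^{k}-\boldsymbol{\theta}^{k-1}\|^{2}$, and $\|\boldsymbol{\theta}^{k-1}-\boldsymbol{\theta}^{k-2}\|^{2}$, with coefficients $2\sigma_{0}\mu$, $\sigma_{1}$, and $\sigma_{2}$ against the corresponding weights $1$, $\beta_{1}$, and $\beta_{2}$. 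Choosing
\begin{equation*}
c(\alpha,\xi)\ :=\ \min\!\Bigl\{\,2\sigma_{0}\mu,\ \tfrac{\sigma_{1}}{\beta_{1}},\ \tfrac{\sigma_{2}}{\beta_{2}}\,\Bigr\}
\end{equation*}
then guarantees a termwise bound $\mathbb{L}^{k+1}-\mathbb{L}^{k}\le -c(\alpha,\xi)\,\mathbb{L}^{k}$, which rearranges to the desired contraction \eqref{linearconveLay1243}.

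To argue $c(\alpha,\xi)\in(0,1)$, I would note that the hypothesis $\sigma_{0}\sigma_{1}\sigma_{2}\ne 0$ together with the non-negativity ensured by \eqref{DDScondition1}--\eqref{condition3} forces $\sigma_{0},\sigma_{1},\sigma_{2}>0$, and the condition $\sigma_{2}>0$ in turn forces $\beta_{2}>0$, while $\sigma_{1}>0$ and $\sigma_{2}>0$ together imply $\beta_{1}>\beta_{2}>0$, so the three ratios in the definition of $c(\alpha,\xi)$ are well-defined and strictly positive; the upper bound $c(\alpha,\xi)<1$ then follows because $\mathbb{L}^{k+1}\ge 0$ and \eqref{linearconveLay1243} together require $1-c(\alpha,\xi)\ge 0$, with strictness coming from the fact that $\sigma_{0}$ and $\beta_{1}$ are bounded below by explicit expressions involving $\alpha\le 1/L$ (as in \eqref{addconstraintLayp}) that keep $2\sigma_{0}\mu$ strictly below $1$.

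Finally, iterating the contraction gives $\mathbb{L}^{k}\le(1-c(\alpha,\xi))^{k}\mathbb{L}^{0}$, and since $\beta_{1},\beta_{2}\ge 0$ imply $f(\boldsymbol{\theta}^{k})-f(\boldsymbol{\theta}^{*})\le\mathbb{L}^{k}$, the claimed bound \eqref{linerratefunctiondecrease} follows by telescoping. I expect the only genuine subtlety to be the bookkeeping that verifies $c(\alpha,\xi)<1$ cleanly from the parameter constraints rather than from the positivity of $\mathbb{L}^{k+1}$ alone; the rest is a direct matching of coefficients between the descent inequality in Lemma~\ref{descentLyapunovfunv} and the weights $(1,\beta_{1},\beta_{2})$ in $\mathbb{L}^{k}$, with strong convexity serving only to convert $\|\nabla f(\boldsymbol{\theta}^{k})\|^{2}$ into the suboptimality gap.
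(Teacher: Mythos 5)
Your proposal is correct and follows essentially the same route as the paper's proof: substitute the strong-convexity bound $2\mu\bigl(f(\boldsymbol{\theta}^{k})-f(\boldsymbol{\theta}^{*})\bigr)\le\|\nabla f(\boldsymbol{\theta}^{k})\|^{2}$ into the Lyapunov descent inequality of Lemma~\ref{descentLyapunovfunv}, match the three terms against the weights $(1,\beta_{1},\beta_{2})$ in \eqref{Laypufunc}, and take $c(\alpha,\xi)=\min\{2\sigma_{0}\mu,\,\sigma_{1}/\beta_{1},\,\sigma_{2}/\beta_{2}\}$, then iterate. The only cosmetic difference is your bookkeeping for $c(\alpha,\xi)<1$, which is obtained most cleanly not from $2\sigma_{0}\mu$ but from $\sigma_{1}/\beta_{1}\le 1-\beta_{2}/\beta_{1}<1$, which is immediate once $\sigma_{1},\sigma_{2}>0$ force $\beta_{1}>\beta_{2}>0$ as you already observed.
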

The detailed proof of Theorem \ref{stonglyconvextheo} is provided in Appendix \ref{linearstronglyconvex}.
Theorem \ref{stonglyconvextheo} implies that the \textcolor[rgb]{0.00,0.00,0.00}{GD-SEC} algorithm can attain a linear convergence rate which is on the same order as the classical GD method \cite{nesterov2013introductory} under {Assumptions \ref{OHBassumption1}} and {\ref{OHBassumption2}}. As a particular example of $c(\alpha,\xi)$, if we choose $\rho_2=1$,
$\delta\in(0,1), \beta\in(0,1],\alpha=\frac{1-\delta}{L}, \xi^2\le\frac{1-\alpha\mu}{\alpha},   \beta_2=\frac{\alpha\xi^2}{1-\alpha\mu}, \mbox{and}\ \beta_1=\beta_2+\frac{1}{1-\alpha\mu}$,
then
\begin{align}\label{CHBcomxplx}
c(\alpha,\xi) = \frac{1-\delta}{L/\mu}
\end{align}
whose value is exactly the same as \textcolor[rgb]{0.00,0.00,0.00}{that for} GD if we also choose $\alpha={(1-\delta)}/{L}$ in GD. \textcolor[rgb]{0.00,0.00,0.00}{Given this specific setting, the iteration complexity being defined as the number of iterations to achieve a user defined error $\epsilon\ge{\mathbb{L}^{k+1}}/{\mathbb{L}^{1}}$ is $\mathbb{I}_{GD-SEC}(\epsilon)= 1/(\alpha\mu)\log(\epsilon^{-1})$ {as shown in} in  Appendix \ref{linearstronglyconvex}. } Additionally, there are an uncountable number of other parameter settings in \textcolor[rgb]{0.00,0.00,0.00}{GD-SEC} that provide the same order convergence rate as GD, but we omit further discussion \textcolor[rgb]{0.00,0.00,0.00}{for brevity}. In addition to the strongly convex case, we also provide a convergence guarantee for \textcolor[rgb]{0.00,0.00,0.00}{GD-SEC} for general convex and nonconvex objective functions with the detailed proofs given in Appendix \ref{convergeconv} and Appendix \ref{convergenonconv}.

\begin{theorem}(\textbf{convex})\label{convextheo}
Under \emph{Assumptions \ref{OHBassumption1} and \ref{OHBassumption3}}, if the constants $\alpha$ and $\xi$ are chosen so that (\ref{etacondition5})--(\ref{condition3}) are satisfied with $\sigma_0\sigma_1\sigma_2\ne0$, then
\begin{align}
f(\boldsymbol{\theta}^{k})-f\left(\boldsymbol{\theta}^{*}\right)=\mathcal{O}(1 / k).
\end{align}
\end{theorem}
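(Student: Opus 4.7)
My plan is to adapt the approach used for Theorem \ref{stonglyconvextheo} by substituting the strong-convexity contraction with a coercivity-plus-convexity argument that furnishes a quadratic self-bound for the Lyapunov function. Lemma \ref{descentLyapunovfunv} still applies verbatim under Assumptions \ref{OHBassumption1} and \ref{OHBassumption3}, so the descent inequality (\ref{nonincreasing}) holds and $\{\mathbb{L}^{k}\}$ is monotonically non-increasing. Since $\mathbb{L}^{k}\ge f(\boldsymbol{\theta}^{k})-f(\boldsymbol{\theta}^{*})\ge 0$, we obtain $f(\boldsymbol{\theta}^{k})\le \mathbb{L}^{1}+f(\boldsymbol{\theta}^{*})$ for every $k$. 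Coercivity of $f$ then forces $\{\boldsymbol{\theta}^{k}\}$ to lie in a bounded set, so there exist finite constants $D$ and $R$ with $\|\boldsymbol{\theta}^{k}-\boldsymbol{\theta}^{*}\|\le D$ and $\|\boldsymbol{\theta}^{k}-\boldsymbol{\theta}^{k-1}\|\le R$ for all $k$.

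Next, I would convert $\mathbb{L}^{k}$ into a bound that matches the right-hand side of (\ref{nonincreasing}). Convexity of $f$ gives $f(\boldsymbol{\theta}^{k})-f(\boldsymbol{\theta}^{*})\le \nabla f(\boldsymbol{\theta}^{k})^{\top}(\boldsymbol{\theta}^{k}-\boldsymbol{\theta}^{*})\le D\,\|\nabla f(\boldsymbol{\theta}^{k})\|$, while $\|\boldsymbol{\theta}^{k}-\boldsymbol{\theta}^{k-1}\|^{2}\le R\,\|\boldsymbol{\theta}^{k}-\boldsymbol{\theta}^{k-1}\|$ (and similarly for the other displacement) converts the quadratic terms of $\mathbb{L}^{k}$ into linear ones. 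Summing and squaring the resulting linear combination via the Cauchy--Schwarz inequality yields
\[
(\mathbb{L}^{k})^{2}\le C_{1}\Big(\|\nabla f(\boldsymbol{\theta}^{k})\|^{2}+\|\boldsymbol{\theta}^{k}-\boldsymbol{\theta}^{k-1}\|^{2}+\|\boldsymbol{\theta}^{k-1}-\boldsymbol{\theta}^{k-2}\|^{2}\Big)
\]
with the explicit constant $C_{1}=3\max\{D^{2},\beta_{1}^{2}R^{2},\beta_{2}^{2}R^{2}\}$. Setting $\sigma=\min\{\sigma_{0},\sigma_{1},\sigma_{2}\}>0$ (all three are positive by the hypothesis $\sigma_{0}\sigma_{1}\sigma_{2}\ne 0$ together with Lemma \ref{descentLyapunovfunv}), the descent inequality combined with the above self-bound collapses to the scalar recursion $\mathbb{L}^{k+1}-\mathbb{L}^{k}\le -(\sigma/C_{1})(\mathbb{L}^{k})^{2}$.

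The final step is the classical one-step trick: dividing this recursion by $\mathbb{L}^{k}\mathbb{L}^{k+1}$ and using $\mathbb{L}^{k+1}\le \mathbb{L}^{k}$ gives $1/\mathbb{L}^{k+1}-1/\mathbb{L}^{k}\ge \sigma/C_{1}$, which telescopes to $\mathbb{L}^{k}\le \big(1/\mathbb{L}^{1}+(k-1)\sigma/C_{1}\big)^{-1}=\mathcal{O}(1/k)$, and the stated rate follows from $f(\boldsymbol{\theta}^{k})-f(\boldsymbol{\theta}^{*})\le \mathbb{L}^{k}$. I expect the main obstacle to be establishing the self-bound $(\mathbb{L}^{k})^{2}\le C_{1}\cdot(\|\nabla f(\boldsymbol{\theta}^{k})\|^{2}+\|\boldsymbol{\theta}^{k}-\boldsymbol{\theta}^{k-1}\|^{2}+\|\boldsymbol{\theta}^{k-1}-\boldsymbol{\theta}^{k-2}\|^{2})$: the right-hand side of (\ref{nonincreasing}) is purely quadratic, whereas $\mathbb{L}^{k}$ contains the first-order contribution $f(\boldsymbol{\theta}^{k})-f(\boldsymbol{\theta}^{*})$ that, absent strong convexity, can only be controlled linearly in $\|\nabla f(\boldsymbol{\theta}^{k})\|$. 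Coercivity is precisely what rescues the estimate, supplying the diameter bounds that let us absorb this order mismatch and play the role filled by $\mu$ in Theorem \ref{stonglyconvextheo}.
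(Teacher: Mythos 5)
Your proposal is correct and follows essentially the same route as the paper's proof: both use Lemma \ref{descentLyapunovfunv} for descent, convexity plus coercivity-induced boundedness of the iterates to obtain a quadratic self-bound of the form $(\mathbb{L}^{k})^{2}\le C\,(\|\nabla f(\boldsymbol{\theta}^{k})\|^{2}+\beta_1\|\boldsymbol{\theta}^{k}-\boldsymbol{\theta}^{k-1}\|^{2}+\beta_2\|\boldsymbol{\theta}^{k-1}-\boldsymbol{\theta}^{k-2}\|^{2})$, and then the standard inverse-telescoping of $1/\mathbb{L}^{k}$ to get the $\mathcal{O}(1/k)$ rate. The only difference is cosmetic: the paper derives the self-bound via a Cauchy--Schwarz inequality on stacked vectors (its Lemma \ref{lyapunvfunctinequa}), while you use the scalar bounds $f(\boldsymbol{\theta}^{k})-f(\boldsymbol{\theta}^{*})\le D\|\nabla f(\boldsymbol{\theta}^{k})\|$ and $(a+b+c)^{2}\le 3(a^{2}+b^{2}+c^{2})$, yielding a slightly different constant but the same argument.
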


\begin{theorem}(\textbf{nonconvex})\label{convergenceTheorem3}
Under \emph{Assumption \ref{OHBassumption1}}, if the constants $\alpha$ and $\xi$ are chosen so that (\ref{etacondition5})--(\ref{condition3}) are satisfied with $\sigma_0\sigma_1\sigma_2\ne0$, then
\textcolor[rgb]{0.00,0.00,0.00}{\begin{align}\label{nonconvexfunc13}
\min_{1\le k'\le k}\left\|\nabla f(\boldsymbol{\theta}^{k'})\right\|^{2}=\mathcal{O}(1 / k).
\end{align}}
\end{theorem}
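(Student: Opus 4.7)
The plan is to exploit the descent property of the Lyapunov function from Lemma \ref{descentLyapunovfunv}, which was established under Assumption \ref{OHBassumption1} alone and does not rely on any convexity. Since the bound in (\ref{nonincreasing}) produces a per-iteration decrease proportional to $\|\nabla f(\boldsymbol{\theta}^{k})\|^2$, a standard telescoping argument combined with a uniform lower bound on $\mathbb{L}^{k}$ will yield the desired $\mathcal{O}(1/k)$ rate on $\min_{k'}\|\nabla f(\boldsymbol{\theta}^{k'})\|^2$.

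First, I would argue that $\mathbb{L}^{k}$ is bounded below independently of $k$. Coordinate-wise $L^i$-smoothness in Assumption \ref{OHBassumption1} implies $f$ is continuous, and coercivity then guarantees that $f$ attains a global minimum $f(\boldsymbol{\theta}^{*})$. Consequently, each of the three summands in the definition (\ref{Laypufunc}) of $\mathbb{L}^{k}$ is non-negative (the first because $f(\boldsymbol{\theta}^{k})\ge f(\boldsymbol{\theta}^{*})$, the other two because $\beta_1,\beta_2\ge 0$ by the hypotheses of Lemma \ref{descentLyapunovfunv}), so $\mathbb{L}^{k}\ge 0$ for every $k$.

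Next, since the theorem assumes $\sigma_0\sigma_1\sigma_2\ne 0$, conditions (\ref{etacondition5})--(\ref{condition3}) hold with all three coefficients non-negative, and Lemma \ref{descentLyapunovfunv} is applicable. Dropping the non-positive contributions of $\sigma_1$ and $\sigma_2$ in (\ref{nonincreasing}) gives
\begin{equation}
\sigma_0\|\nabla f(\boldsymbol{\theta}^{k'})\|^{2}\le \mathbb{L}^{k'}-\mathbb{L}^{k'+1}
\end{equation}
for every $k'\ge 1$. Summing from $k'=1$ to $k$, the right-hand side telescopes and is bounded by $\mathbb{L}^{1}-\mathbb{L}^{k+1}\le \mathbb{L}^{1}$ thanks to the lower bound established above, so
\begin{equation}
\sum_{k'=1}^{k}\|\nabla f(\boldsymbol{\theta}^{k'})\|^{2}\le \frac{\mathbb{L}^{1}}{\sigma_0}.
\end{equation}
Bounding the minimum by the average then yields
\begin{equation}
\min_{1\le k'\le k}\|\nabla f(\boldsymbol{\theta}^{k'})\|^{2}\le \frac{1}{k}\sum_{k'=1}^{k}\|\nabla f(\boldsymbol{\theta}^{k'})\|^{2}\le \frac{\mathbb{L}^{1}}{\sigma_0\,k}=\mathcal{O}(1/k),
\end{equation}
completing the argument.

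Since Lemma \ref{descentLyapunovfunv} carries essentially all of the technical weight, there is no serious obstacle in the nonconvex case itself; the only subtle point is justifying the uniform lower bound on $\mathbb{L}^{k}$ without appealing to convexity. Coercivity of $f$ in Assumption \ref{OHBassumption1} is precisely what makes this step go through, which is why that assumption was included in the smoothness hypothesis rather than only requiring Lipschitz gradients.
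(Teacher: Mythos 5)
Your proposal is correct and follows essentially the same route as the paper: both arguments rest entirely on Lemma \ref{descentLyapunovfunv}, telescope the Lyapunov decrease over $k'=1,\dots,k$, and use $\mathbb{L}^{k+1}\ge 0$ to bound the accumulated squared gradient norms by a constant (the paper keeps all three $\sigma$-terms and cites a lemma on nonnegative summable sequences, while you drop the $\sigma_1,\sigma_2$ terms and use the elementary min-by-average bound, an inessential difference). Your explicit justification of $\mathbb{L}^{k}\ge 0$ via coercivity and attainment of the minimum is a point the paper leaves implicit, and it is handled correctly.
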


\textcolor[rgb]{0.00,0.00,0.00}{Theorem \ref{convextheo} and Theorem \ref{convergenceTheorem3} indicate GD-SEC can achieve the same order of convergence rate as GD for convex and nonconvex objective functions.} \textcolor[rgb]{0.00,0.00,0.00}{Note that these convergence results are obtained using a fixed step size, whereas methods that fix the number of components for transmission are not guaranteed to converge using a fixed step size \cite{stich2018sparsified}.} 


\section{Numerical Results}\label{NumericalResult}

To validate the effectiveness in reducing \textcolor[rgb]{0.00,0.00,0.00}{the total number of transmitted bits up to the current iteration} and the theoretical results on convergence analysis, the empirical performance of \textcolor[rgb]{0.00,0.00,0.00}{GD-SEC} is evaluated through numerical experiments \textcolor[rgb]{0.00,0.00,0.00}{where the total number of transmitted bits are defined as a communication cost}. We employ $32$ bits to represent the value of an entry with negligible loss in precision, and apply the Run-Length Encoding (RLE) algorithm \textcolor[rgb]{0.00,0.00,0.00}{\cite{brooks2020run}} to encode the indices of the non-zero components of $\hat\Delta_m^k$ in \textcolor[rgb]{0.00,0.00,0.00}{GD-SEC}. In particular, by counting the number of consecutive zeros between two non-zero components, we can \textcolor[rgb]{0.00,0.00,0.00}{use RLE to efficiently encode} the locations of all non-zero components compared to the naive encoding (index and value pairs). To benchmark \textcolor[rgb]{0.00,0.00,0.00}{GD-SEC} with RLE, we 
{compare with} the following four algorithms:

$\bullet$ GD (baseline) \cite{nesterov2013introductory}: At each iteration $k$, each worker $m$ transmits the original gradient vector $\nabla f_m(\boldsymbol{\theta}^{k})\in\mathbb{R}^{d}$ to the server. Then the server aggregates all gradients $\{\nabla f_m(\boldsymbol{\theta}^{k})\}_{m=1}^M$ and runs gradient descent to update $\boldsymbol{\theta}^{k}$. \textcolor[rgb]{0.00,0.00,0.00}{This requires $32\times d$ bits to represent the gradient vector for transmission.}

$\bullet$ top-$j$ with RLE \cite{stich2018sparsified}: At each iteration $k$, each worker $m$ only transmits 
\textcolor[rgb]{0.00,0.00,0.00}{
the top $j$ entries ($j$ is fixed), in terms of absolute value of the gradient vector components} and then  employs error correction to accumulate errors from the entries which are not transmitted. \textcolor[rgb]{0.00,0.00,0.00}{Note that top-$j$ with error correction is only guaranteed to converge for strongly convex functions with gradually decreasing step sizes \cite{stich2018sparsified}.
} RLE is applied to encode the indices of the non-zero entries of the transmitted sparsified vector.

$\bullet$ \textcolor[rgb]{0.00,0.00,0.00}{Censoring-based GD (CGD) with RLE:  At each iteration $k$ of the original CGD \cite{chen2018lag},   each worker $m$ transmits the entire gradient vector $\nabla f_m(\boldsymbol{\theta}^{k})$ to the server if the current gradient $\nabla f_m(\boldsymbol{\theta}^{k})$ is sufficiently different from its previously transmitted one; otherwise, worker $m$ discards its current gradient $\nabla f_m(\boldsymbol{\theta}^{k})$ and transmits nothing, unlike \textcolor[rgb]{0.00,0.00,0.00}{GD-SEC}
where worker $m$ locally accumulates the current gradient $\nabla f_m(\boldsymbol{\theta}^{k})$. A sufficient difference is judged 
by the size of $\tilde{\xi}\|\boldsymbol{\theta}^{k}-\boldsymbol{\theta}^{k-1}\|/M$ with $\tilde{\xi}>0$.  This approach requires $32\times d$ bits to represent the gradient vector for transmission. Here we employ Censoring-based GD (CGD) with RLE which is a variant of CGD where we use RLE to encode the indices of the non-zero entries of the transmitted vector.}

$\bullet$ \textcolor[rgb]{0.00,0.00,0.00}{Quantized GD (QGD) \cite{alistarh2017qsgd,reisizadeh2020fedpaq}: Each worker $m$ transmits the quantized version of the gradient vector $\nabla f_m(\boldsymbol{\theta}^{k})\in\mathbb{R}^{d}$ to the server. For any $\boldsymbol{v}\in\mathbb{R}^{d}$, the low-precision unbiased quantizer output is defined as $Q_s([\boldsymbol{v}]_i)=\|\boldsymbol{v}\|\cdot\text{sign}([\boldsymbol{v}]_i)\cdot\eta_i(\boldsymbol{v},s)$, where $s$ is the total number of quantization intervals (bins) and  $\{ \eta_i(\boldsymbol{v},s), \forall i \}$ are independent random variables each taking on value $(l+1)/s$ with probability $p=|[\boldsymbol{v}]_i|\cdot s/\|\boldsymbol{v}\|-l$ and $l/s$ with probability $(1-p)$. Here $0\le l<s$ is an integer describing which quantization interval the current observation falls into such that $|[\boldsymbol{v}]_i|/\|\boldsymbol{v}\|\in[l/s,(l+1)/s]$. In the following results, for each transmitted vector, we employ $8$ bits and $1$ bit to represent the value and the sign of each non-zero component, respectively. \textcolor[rgb]{0.00,0.00,0.00}{An extra 32} bits are used to approximate $\|\boldsymbol{v}\|$ when $\|\boldsymbol{v}\|\ne0$.  }

$\bullet$ \textcolor[rgb]{0.00,0.00,0.00}{Nonuniform sampling of incremental aggregated gradient (NoUnif-IAG) \cite{schmidt2017minimizing}: At each iteration of NoUnif-IAG, only one worker is selected to transmit its current fresh gradient to the server, and the server aggregates this new local gradient with the past transmitted gradients from other workers to obtain the final entire gradient. The probability of selecting worker $m$ to transmit is set to be $L_m/\sum_{m=1}^ML_m$. }

We consider four optimization problems: regularized linear regression (convex), regularized logistic regression (strongly convex), lasso regression (nondifferentiable) and non-linear least squares \cite{xu2020second} (nonconvex). The objective error $f(\boldsymbol{\theta}^k)-f(\boldsymbol{\theta}^*)$ is used to evaluate the algorithm progress for all tasks. \textcolor[rgb]{0.00,0.00,0.00}{We use one server and $M=5$ workers.} Except as stated elsewhere, in \textcolor[rgb]{0.00,0.00,0.00}{GD-SEC} we restrict the threshold $\xi_1=\xi_2=...=\xi_d=\xi$ in (\ref{ASGD-stopping condition}), set the constant $\beta=0.01$ in (\ref{explainstatevariab}) to average the previously transmitted gradients, and initialize the state variable $\boldsymbol{h}_m^1=\boldsymbol{0}$ for each worker $m$. The constant step size is tuned for GD, and \textcolor[rgb]{0.00,0.00,0.00}{all other algorithms use the same step size except top-$j$ and NoUnif-IAG\footnote{Based on experiments, top-$j$ does not converge using this constant step and NoUnif-IAG is not stable when using the same step size as GD.} to enable fair comparison in a given learning task. According to top-$j$ in \cite{stich2018sparsified}, we {use} a decreasing step size $\alpha_k=\gamma_0(1+\gamma_0\lambda k)^{-1}$ with constants $\gamma_0$ and $j$ being tuned}. For \textcolor[rgb]{0.00,0.00,0.00}{GD-SEC} and CGD, we tune the best thresholds $\tilde{\xi}$ and $\xi$ (\textcolor[rgb]{0.00,0.00,0.00}{restricted} to be integer) such that each of them can not only converge quickly (similar to  GD) but also save the largest total amount of transmitted bits.

\subsection{Algorithm Comparison in Regularized Linear Regression}\label{linearReMNIST}

\begin{figure}[!t]
\centering
\includegraphics[width=3.6in]{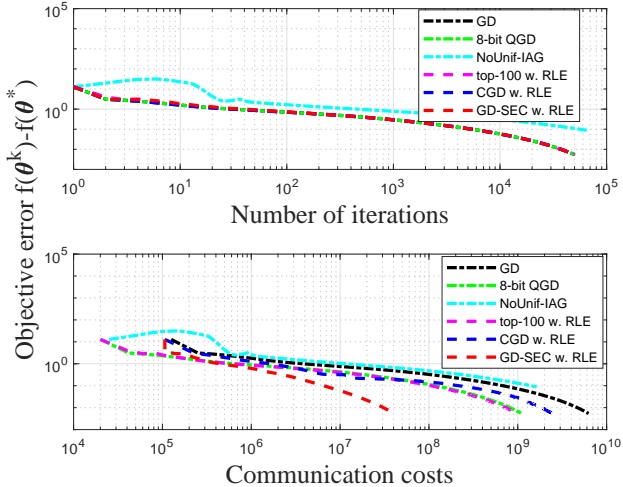}
\caption{\textcolor[rgb]{0.00,0.00,0.00}{Objective error versus the number of iterations and the communication costs for regularized linear regression in the \emph{MNIST} dataset (using 2000 data samples).}}
\label{conv_fig1}
\end{figure}

We first consider the convergence and \textcolor[rgb]{0.00,0.00,0.00}{total number of transmitted bits} of \textcolor[rgb]{0.00,0.00,0.00}{GD-SEC} for regularized linear regression \textcolor[rgb]{0.00,0.00,0.00}{using a} subset of the  MNIST dataset \cite{lecun1998gradient}. Specifically, we select the first $2000$ data samples and the corresponding labels as our training data, and evenly split them \textcolor[rgb]{0.00,0.00,0.00}{among} five workers ($M=5$). In the regularized linear regression problem, the local function defined in (\ref{basicprobNSF2020}) for worker $m$ for $m=1,2,...,M$ is
\begin{align}\label{linearloss}
f_{m}(\boldsymbol{\theta}) :=\frac{1}{2N}\sum_{n=1}^{{N}_{m}}\left(y_{n}-\mathbf{x}_{n}^{\top} \boldsymbol{\theta}\right)^{2} + \frac{\lambda}{2M}\|\boldsymbol{\theta}\|^2,
\end{align}
where $N=2000$ is the total number of samples, ${N}_{m}=400$ is the number of samples at worker $m$, $y_{n}$ is the $n$-th label, $\mathbf{x}_{n}$ is the $n$-th feature vector, $\boldsymbol{\theta}$ is the parameter vector, and $\lambda$ is the regularization parameter which is set to be $\textcolor[rgb]{0.00,0.00,0.00}{\lambda=}1/N=1/2000$ following \cite{schmidt2017minimizing}. \textcolor[rgb]{0.00,0.00,0.00}{In the experiments, the step size $\alpha=1/L\approx0.0258$ is tuned for GD, and all other algorithms use the same step size except top-$j$ and NoUnif-IAG. In top-$j$, we have tried all combinations for $j=\{1,10,100\}$ and $\gamma_0=\{0.01,0.1,1,10\}$ and found that setting $j=100$ and $\gamma_0=0.01$ achieves the best convergence performance. In order to have stability, we set $\alpha=1/(2ML)$ for NoUnif-IAG}. We set $\xi/M=800$ in \textcolor[rgb]{0.00,0.00,0.00}{GD-SEC}, and $\tilde\xi/M=1$ in CGD. The performance is shown in Fig. \ref{conv_fig1}. The results in Fig. \ref{conv_fig1} indicate that \textcolor[rgb]{0.00,0.00,0.00}{GD-SEC} outperforms \textcolor[rgb]{0.00,0.00,0.00}{the} other algorithms in \textcolor[rgb]{0.00,0.00,0.00}{reducing the total number of transmitted bits} while achieving nearly the same convergence as the classical GD. \textcolor[rgb]{0.00,0.00,0.00}{Given an objective error of $5.4\times10^{-3}$,  \textcolor[rgb]{0.00,0.00,0.00}{GD-SEC} saves about $99.34\%$ of the total transmitted bits compared to classical GD}.


\subsection{Algorithm Comparison in Regularized Logistic Regression}

\begin{figure}[!t]
\centering
\includegraphics[width=3.6in]{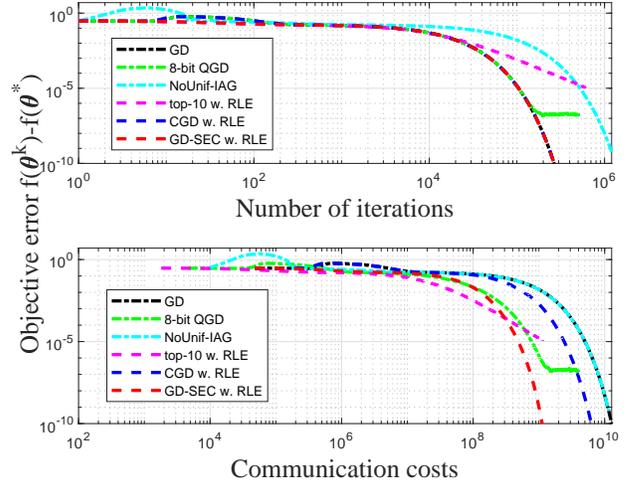}
\caption{\textcolor[rgb]{0.00,0.00,0.00}{Objective error versus the number of iterations and  the communication costs for regularized logistic regression in the synthetic dataset}.}
\label{conv_fig2}
\end{figure}

Now we consider the performance of \textcolor[rgb]{0.00,0.00,0.00}{GD-SEC} 
\textcolor[rgb]{0.00,0.00,0.00}{
for synthetic datasets and regularized} logistic regression. \textcolor[rgb]{0.00,0.00,0.00}{Here}, the local function defined in (\ref{basicprobNSF2020}) for worker $m$, \textcolor[rgb]{0.00,0.00,0.00}{$m=1,2,...,M$}, is
\begin{align}\label{logisticloss}
f_{m}(\boldsymbol{\theta}) :=\frac{1}{N}\sum_{n=1}^{{N}_{m}} \log \left(1+\exp \left(-y_{n} \mathbf{x}_{n}^{\top} \boldsymbol{\theta}\right)\right)+\frac{\lambda}{2M}\|\boldsymbol{\theta}\|^{2},
\end{align}
where $\lambda=1/N$ and $M=5$. For each worker $m$, we randomly generate an independent sequence of labels over $n$, each with equal probability $y_n=1$ or $y_n=-1$, and we also randomly generate 50 independent instances $\mathbf{x}_{n}\in\mathbb{R}^{300}$ (to pair with labels) from a uniform distribution denoted by $U(a,b)$. For each  $\mathbf{x}_{n}\in N_m$, the entries in the coordinates from 
$50m-49$ to $50m$ follow $U(0,1)$, those in the coordinates from $251$ to $300$ follow $U(0,10)$ and the other entries follow $U(0,0.01)$. This corresponds to a multi-agent system where each agent \textcolor[rgb]{0.00,0.00,0.00}{observes some specific} features and all agents have some common features. The step size $\alpha=0.0078$ is tuned for GD, and all other algorithms use the same step size except top-j and NoUnif-IAG. \textcolor[rgb]{0.00,0.00,0.00}{In top-$j$, we have tried all combinations for $j=\{1,10,100\}$ and $\gamma_0=\{0.01,0.1,1,10\}$ and found that setting $j=10$ and $\gamma_0=0.01$ achieves the best convergence performance}. \textcolor[rgb]{0.00,0.00,0.00}{We set the step size $\alpha'=\alpha/M$ for NoUnif-IAG to guarantee stability. } The parameters $\xi$ and $\tilde\xi$ are tuned to $\xi/M=80$ and $\tilde\xi/M=40$ that are the best for \textcolor[rgb]{0.00,0.00,0.00}{GD-SEC} and CGD, respectively. \textcolor[rgb]{0.00,0.00,0.00}{As shown in Fig. \ref{conv_fig2}, \textcolor[rgb]{0.00,0.00,0.00}{GD-SEC} is able to significantly reduce the total number of transmitted bits, while maintaining a fast convergence when compared to other algorithms. Given an objective error of $10^{-10}$, \textcolor[rgb]{0.00,0.00,0.00}{GD-SEC} can save about $91.22\%$ of the all transmitted bits compared to classical GD.}

\subsection{Impact of Error Correction}\label{errorCorrImpact}

\begin{figure}[!t]
\centering
\includegraphics[width=3.6in]{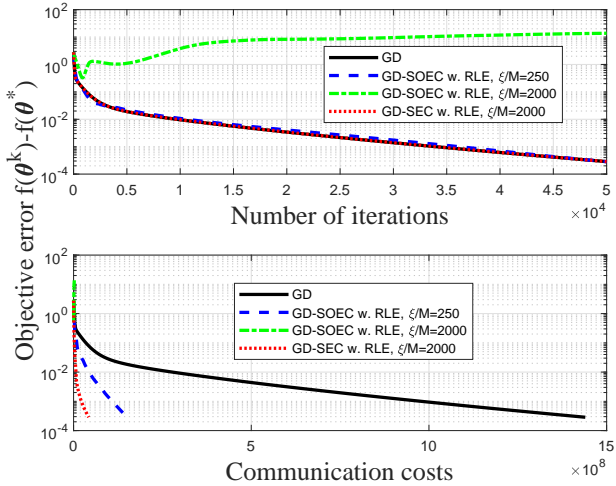}
\caption{Objective error versus the number of iterations and the communication costs for lasso regression in the DNA dataset.}
\label{conv_fig3}
\end{figure}

Next we consider \textcolor[rgb]{0.00,0.00,0.00}{the} impact of error correction for \textcolor[rgb]{0.00,0.00,0.00}{GD-SEC} on communication savings in \textcolor[rgb]{0.00,0.00,0.00}{a lasso regression using} the DNA dataset \cite{chang2011libsvm}.  \textcolor[rgb]{0.00,0.00,0.00}{Here}, the local function defined in (\ref{basicprobNSF2020}) for worker $m$ is
\begin{align}\label{lassoloss}
f_{m}(\boldsymbol{\theta}) :=\frac{1}{2N}\sum_{n=1}^{{N}_{m}}\left(y_{n}-\mathbf{x}_{n}^{\top} \boldsymbol{\theta}\right)^{2} + \frac{\lambda}{M}\|\boldsymbol{\theta}\|_1,
\end{align}
where $\|\boldsymbol{\theta}\|_1$ is $\ell_1$-norm of $\boldsymbol{\theta}$. Since $\|\boldsymbol{\theta}\|_1$ is nondifferentiable, worker $m$ computes the subgradient \textcolor[rgb]{0.00,0.00,0.00}{given by}
\begin{align}\label{lassolossGrad}
\partial f_{m}(\boldsymbol{\theta}) = -\frac{1}{N}\sum_{n=1}^{{N}_{m}}{\mathbf{x}_{n}}\left(y_{n}-\mathbf{x}_{n}^{\top} \boldsymbol{\theta}\right) + \frac{\lambda}{M}\mbox{sign}(\boldsymbol{\theta}),
\end{align}
where the sign function $\mbox{sign}(\boldsymbol{\theta})$ is defined as an element-wise operator. The step size is tuned to $\alpha=0.001$ for GD, and \textcolor[rgb]{0.00,0.00,0.00}{others} employs the same step size. The parameter $\xi$ is tuned to $\xi/M=2000$ for \textcolor[rgb]{0.00,0.00,0.00}{GD-SEC} and $\xi/M=250$ for \textcolor[rgb]{0.00,0.00,0.00}{GD with sparsification but without error correction\footnote{\textcolor[rgb]{0.00,0.00,0.00}{GD-SOEC can be obtained by setting $e_m^k=\boldsymbol{0}$ for every $m$ and $k$ in Algorithm \ref{varianceReduceOGD}.}}(GD-SOEC)} \textcolor[rgb]{0.00,0.00,0.00}{to} ensure these two proposed algorithms converge to the optimal solution \textcolor[rgb]{0.00,0.00,0.00}{with} the smallest number of transmitted bits. \textcolor[rgb]{0.00,0.00,0.00}{Note that both \textcolor[rgb]{0.00,0.00,0.00}{GD-SEC and GD-SOEC} use RLE.} \textcolor[rgb]{0.00,0.00,0.00}{We find that \textcolor[rgb]{0.00,0.00,0.00}{GD-SEC} converges and works well with a large range of $\xi/M\in(0,2000]$}, which is desirable in practice. Fig. \ref{conv_fig3} shows that \textcolor[rgb]{0.00,0.00,0.00}{with proper choice of $\xi/M$ both {GD-SEC} and {GD-SOEC}  can significantly reduce the number of transmitted bits compared to GD, and {GD-SEC} is the best overall. This implies we can set a large threshold in GD-SEC and even though many components are not transmitted, error correction mitigates the error due to this. This is not possible with GD-SOEC. }

\subsection{Impact of \textcolor[rgb]{0.00,0.00,0.00}{State Variable}}\label{effectofbeta}

\begin{figure}[!t]
\centering
\includegraphics[width=3.6in]{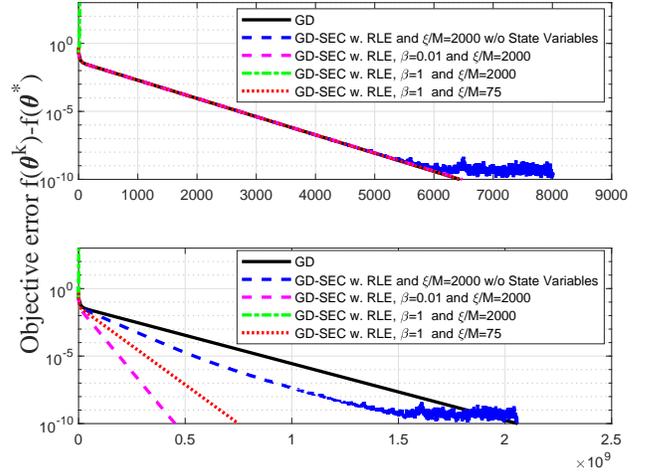}
\caption{Objective error versus the number of iterations and the communication costs for linear regression in the COLON-CANCER dataset.}
\label{conv_fig4}
\end{figure}

To understand how the weighted average of the previously transmitted gradients (\textcolor[rgb]{0.00,0.00,0.00}{state variable}) impacts the communication savings, we test the performance of \textcolor[rgb]{0.00,0.00,0.00}{GD-SEC} on the real dataset COLON-CANCER \cite{chang2011libsvm} for linear regression with different values of $\beta$. We tune the step size $\alpha= 1/L=0.0031$ for GD and let \textcolor[rgb]{0.00,0.00,0.00}{GD-SEC} use the same step size. \textcolor[rgb]{0.00,0.00,0.00}{Compared to GD-SEC without state variables, Fig. \ref{conv_fig4} shows that GD-SEC with state variables can save more transmitted bits without sacrificing convergence {speed}.  Even though some workers do not transmit in GD-SEC with state variables, the server can use its state variable to approximate the sum of the current local gradients of workers instead of just dropping them like GD-SEC without state variables.} As depicted in Fig. \ref{conv_fig4}, setting a relatively small $\beta$, e.g. $\beta=0.01$, enables us to choose a relatively large threshold $\xi$, e.g. $\xi/M=2000$, to reduce the number of transmitted bits. \textcolor[rgb]{0.00,0.00,0.00}{This makes sense {because} with a small $\beta>0$,  the overall descent direction smoothing by ${\boldsymbol{h}}_m^{k+1}$ {maintains accuracy} even though we suppress many components 
by setting a large $\xi/M$.} Fig. \ref{conv_fig4} also shows that increasing $\beta$ without decreasing $\xi$ might lead \textcolor[rgb]{0.00,0.00,0.00}{GD-SEC} to diverge. \textcolor[rgb]{0.00,0.00,0.00}{This behavior follows because increasing $\beta$ will make ${\boldsymbol{h}}_m^{k+1}$ depend more on the last transmitted gradient and be more sensitive to the gradient change, for example the state variable ${\boldsymbol{h}}_m^{k+1}$ reduces to the last transmitted gradient $\nabla f_m({\boldsymbol{\theta}}^{k})$ if we set $\beta=1$.} Thus, the results in Fig. \ref{conv_fig4} suggest that {for the given $\alpha=0.0031$}, in order to choose a large threshold $\xi/M$ to reduce the number of transmitted bits, setting a small $\beta$ with $\beta>0$, e.g. $\beta=0.01$, \textcolor[rgb]{0.00,0.00,0.00}{is} a good choice.

\subsection{Impact of $\{\xi_i\}_{i=1}^d$}

\begin{figure}[!t]
\centering
\includegraphics[width=3.5in]{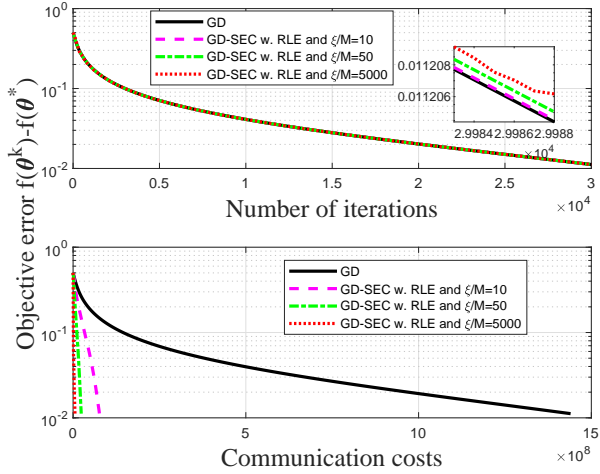}
\caption{Objective error versus the number of iterations and the communication costs for nonlinear least square in the W2A dataset.}
\label{conv_fig5}
\end{figure}

Next we evaluate the impact of the parameters $\{\xi_d\}_{i=1}^d$ on the communication savings of \textcolor[rgb]{0.00,0.00,0.00}{GD-SEC} in the W2A dataset \cite{chang2011libsvm} \textcolor[rgb]{0.00,0.00,0.00}{with a nonlinear least squares cost function}. For simplicity, we consider a special case where we set $\xi_1=\xi_2=...=\xi$. The local function defined in (\ref{basicprobNSF2020}) for worker $m$ for $m=1,2,...,M$ is \cite{xu2020second}
\begin{align}\label{nonlinearloss}
f_{m}(\boldsymbol{\theta}) :=\frac{1}{2N}\sum_{n=1}^{{N}_{m}}\left(y_{n}-\frac{1}{1+\exp(-\mathbf{x}_{n}^{\top} \boldsymbol{\theta})} \right)^{2} + \frac{\lambda}{2M}\|\boldsymbol{\theta}\|^2,
\end{align}
where $\lambda=1/N$. Note that $f_{m}(\boldsymbol{\theta})$ is non-convex. The step size is tuned to $\alpha=0.005$ for GD, and \textcolor[rgb]{0.00,0.00,0.00}{GD-SEC} employs the same step size. As shown in Fig. \ref{conv_fig5}, \textcolor[rgb]{0.00,0.00,0.00}{GD-SEC} with different $\xi$ performs similarly with GD in terms of the number of iterations but can significantly reduce the number of communication bits to achieve a given objective error. When $\xi$ increases, the number of transmitted bits are reduced at the cost of slightly increasing the number of iterations. Compared to GD, \textcolor[rgb]{0.00,0.00,0.00}{GD-SEC} with $\xi/M=5000$ \textcolor[rgb]{0.00,0.00,0.00}{employs only $0.38\%$ of the} number of bits to achieve the objective error $0.0112$. The results in Fig. \ref{conv_fig5} also indicate that \textcolor[rgb]{0.00,0.00,0.00}{GD-SEC} achieves a favorable communication-computation tradeoff through \textcolor[rgb]{0.00,0.00,0.00}{adaptive} sparsification, error correction and state variables.

\subsection{Different Coordinate-wise Lipschitz Constants}

\begin{figure}[!t]
\centering
\includegraphics[width=3.6in]{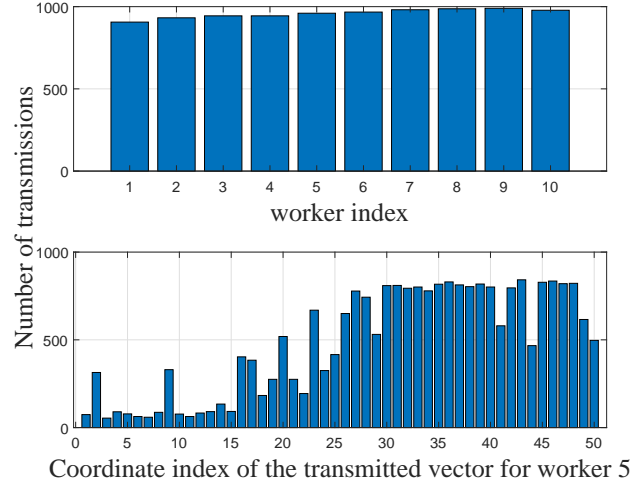}
\caption{Number of transmissions of different workers and different coordinates for the first 1000 iterations in linear regression with smoothness constants satisfying $L_1<L_2<...<L_{10}$ and coordinate-wise smoothness constants satisfying $L_m^1<L_m^2<...<L_m^{50}$ for $m=1,2,...,10$.}
\label{conv_fig6}
\end{figure}

{Now} we consider the impact of different smoothness constants at different coordinates on communication savings in a synthetic dataset for linear regression. Specifically, we consider regularized linear regression in (\ref{linearloss}) with $\lambda=0$ and the increasing coordinate-wise smoothness \textcolor[rgb]{0.00,0.00,0.00}{constants} with $L_m^1<L_m^2<...<L_m^{50}$ for worker $m$ in a scenario with a server and ten workers. For each worker $m$, we randomly generate an independent sequence of labels, each with equal probability \textcolor[rgb]{0.00,0.00,0.00}{of} $y_n=1$ or $y_n=-1$ for $n=1,2,...,50$, and we also randomly generate 50 independent instances  $\mathbf{x}_{n}\in\mathbb{R}^{50}$ (to pair with the labels) from a uniform distribution $U(0,0.01)$ but \textcolor[rgb]{0.00,0.00,0.00}{we replace the $n$-th entry of $\mathbf{x}_{n}$ with} $m\times1.1^n$ to mimic increasing coordinate-wise smoothness constants as the coordinate indices are increased. Note that in this setting the smoothness constants for different workers satisfy $L_1<L_2< ...<L_{10}$. The linear regression task is stopped after $1000$ iterations with the step size $\alpha=1/L=9.4241\times10^{-5}$ and the threshold $\xi=50000$. As indicated in Fig. \ref{conv_fig6}, \textcolor[rgb]{0.00,0.00,0.00}{workers with smaller smoothness constants} tend to transmit less frequently, which is reasonable since the smoothness constant measures the maximum rate of change of the gradient for worker $m$. The result in Fig. \ref{conv_fig6} also shows that for a given worker \textcolor[rgb]{0.00,0.00,0.00}{(e.g. $m=5$)}, the coordinates with smaller coordinate-wise smoothness constants tend to transmit less frequently which \textcolor[rgb]{0.00,0.00,0.00}{also} makes sense. The results in Fig. \ref{conv_fig6} imply that \textcolor[rgb]{0.00,0.00,0.00}{GD-SEC} can not only exploit the smoothness of the local functions for different workers but also capture the smoothness of different coordinates of the local function for a given worker so as to \textcolor[rgb]{0.00,0.00,0.00}{reduce the number of bits per transmission and the number of communications from all workers}.

{Inspired by the impact of different smoothness constants on different coordinates on communication savings in Fig. \ref{conv_figscale}, we consider further improving communication savings by scaling each component of $\xi$ differently using $\xi_i=\xi/L^i$ for $i=1,2,...,d$ with $L^i$ being the smoothness constant for the $i$-th coordinate of $f(\boldsymbol{\theta})$. Now we evaluate the overall impact of this setting on the communication savings of \textcolor[rgb]{0.00,0.00,0.00}{GD-SEC} {for} a sparse dataset called the RCV1-train dataset \cite{lewis2004rcv1} with {the} regularized logistic regression cost function shown in (\ref{logisticloss}). Here we employ $75\%$ of its data samples to train the model (in other words, we employ $15181$ data samples) and each data sample contains $47236$ features. We consider a scenario with one server and five workers and set the regularization parameter $\lambda=1/N$ with $N=15181$. We have performed a grid search for hyper-parameter tuning where we tried all the combinations for $\alpha\in\{2^{-10},2^{-9},...,2^{11}\}$, $\beta\in\{2^{-10},2^{-9},...,2^{0}\}$ and $\xi\in\{2^{-5},2^{-4},...,2^{14}\}$. For a given objective function value, we choose the best $\alpha$ for GD, and choose the best $\alpha$, $\beta$ and $\xi$ for \textcolor[rgb]{0.00,0.00,0.00}{GD-SEC}. In Fig. \ref{conv_figscale}, we plot the objective function value versus the total number of transmitted entries in the first $1000$ iterations. The results in Fig. \ref{conv_figscale} illustrate that setting $\xi_i=\xi/L^i$ does lead to larger communication savings compared to setting $\xi_i=\xi$ for all $i=1,2,...,47236$, which makes sense because the gradient in the coordinate with a smaller $L^i$ tends to change less frequently and thus we can use a larger threshold by setting $\xi_i=\xi/L^i$ to reduce the number of transmitted entries per communication. }

\begin{figure}[!t]
\centering
\includegraphics[width=3.5in]{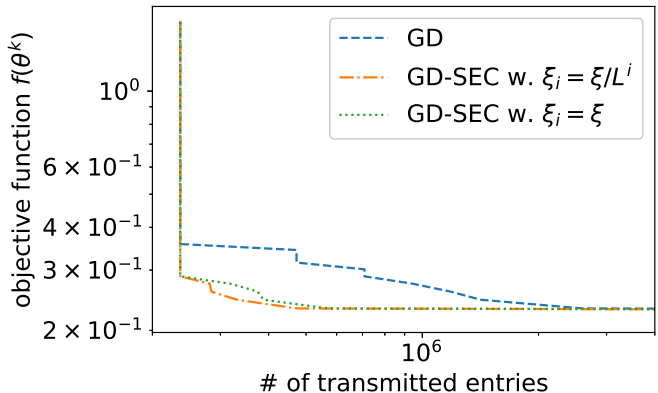}
\caption{Objective function value versus the total number of transmitted entries for logistic regression in the RCV1-train
dataset.}
\label{conv_figscale}
\end{figure}

\textcolor[rgb]{0.00,0.00,0.00}{\subsection{Possible Extensions of GD-SEC}\label{extendSec}}

\textcolor[rgb]{0.00,0.00,0.00}{In the following subsections, we consider two possible extensions to satisfy some limitations in \textcolor[rgb]{0.00,0.00,0.00}{practice. First, allowing a portion of workers for parameter uploading at each iteration,  and second,} developing the stochastic version GD-SEC (SGD-SEC). We apply the RLE algorithm in GD-SEC and SGD-SEC to encode the indices of transmitted non-zero components. We do not apply RLE to quantized SGD-SEC. }

\subsubsection{Bandwidth-limited Version of GD-SEC}

\begin{figure}[!t]
\centering
\includegraphics[width=3.6in]{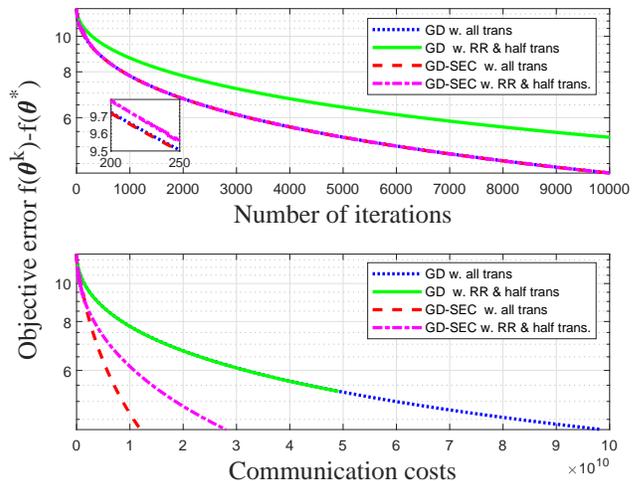}
\caption{\textcolor[rgb]{0.00,0.00,0.00}{Objective error versus the number of iterations and the communication costs for linear regression in the {CIFAR}-10 dataset  (using 2000 data samples).}}
\label{conv_figPortion}
\end{figure}

\textcolor[rgb]{0.00,0.00,0.00}{Motivated by limited spectral resources and unreliable clients, we consider a scenario where the server cannot collect updates from all the workers at each iteration and instead only schedules a portion of workers for parameter uploading. A bandwidth-limited version of GD-SEC can be obtained by only allowing a portion of workers to transmit.\footnote{Specifically, we need to change line 3 in Algorithm \ref{varianceReduceOGD} to \textbf{for} $m\in\mathcal{M}_k$ \textbf{do} where $\mathcal{M}_k$ is a set of selected worker indices at iteration $k$. $\mathcal{M}_k$ can be obtained using the scheduling policies in \cite{yang2019scheduling}.} In Fig. \ref{conv_figPortion}, we consider a setting with one server and 100 workers where each worker has its own local linear regression function defined in (\ref{linearloss}). We randomly choose 2000 data samples from the standardized \textcolor[rgb]{0.00,0.00,0.00}{CIFAR}-10 dataset and evenly split them \textcolor[rgb]{0.00,0.00,0.00}{among the} 100 workers. We set $\lambda=1/N$ and tune the step size $\alpha= 2/L\approx0.0023$ for all considered algorithms. One scheduling policy called round robin (RR) in \cite{yang2019scheduling} is considered here to explore how only using half of {the} workers per iteration affects the performance of GD-SEC. We tried $\xi/M=\{1,10,100\}$ for GD-SEC related algorithms. Specifically we set $\xi/M=100$ for GD-SEC with all transmissions per iteration and set $\xi/M=10$ for GD-SEC with RR and half transmissions per iteration. Note that setting $\xi/M=100$ for the latter case leads to divergence in this scenario which makes sense {because} setting a larger threshold and only allowing {half the} workers to transmit might not enable the server to have enough correct information about function value descent direction. It is not surprising that GD with half transmissions per iteration has worse convergence behavior compared to
GD with all transmissions per iteration. {Interestingly}, Fig.   \ref{conv_figPortion} indicates GD-SEC with RR and half transmissions per iteration makes progress which is only slightly slower than GD-SEC and GD with all transmissions per iteration {for this example}. Even though some workers do not transmit, the server can still use its state variable to approximate the sum of the current local gradients of workers.    Compared with GD with all transmissions, Fig. \ref{conv_figPortion} also shows that the GD-SEC related algorithms significantly outperform classical GD in terms of communication bit savings, and GD-SEC with all workers transmitting per iteration achieves the best performance in terms of transmitted bit savings. \textcolor[rgb]{0.00,0.00,0.00}{Although beyond the scope of this work, obtaining an optimal scheduling policy for GD-SEC is an interesting open question for future work.} }

\subsubsection{Stochastic Version of GD-SEC}

\textcolor[rgb]{0.00,0.00,0.00}{Now we consider a stochastic version of GD-SEC (SGD-SEC) where each local gradient is computed based on randomly selected samples of the local dataset instead of the entire local dataset. To further improve the communication bit savings of SGD-SEC, we propose a quantized SGD-SEC (QSGD-SEC) by combining SGD-SEC with a widely used low precision unbiased quantizer \cite{alistarh2017qsgd,reisizadeh2020fedpaq}. We employ the same setting as Section \ref{linearReMNIST} but we split {$N=6000$} between {$M=100$} where the {mini-batch size} is set to be one. We also set $\lambda=1/N$, $\alpha_k=\gamma_0(1+\gamma_0\lambda k)^{-1}$ with constants   $\gamma_0=0.01$ for all considered algorithms, and $\xi/M=100$ for SGD-SEC related algorithms. All other parameters are the same as Section \ref{linearReMNIST}. Fig. \ref{conv_figstochastic} shows that SGD-SEC outperforms SGD in reducing the number of transmitted bits while achieving nearly the same convergence. It also shows that the performance of SGD-SEC in terms of bit savings can be further improved using quantization {of} the non-zero components after sparsification.  }


\begin{figure}[!t]
\centering
\includegraphics[width=3.6in]{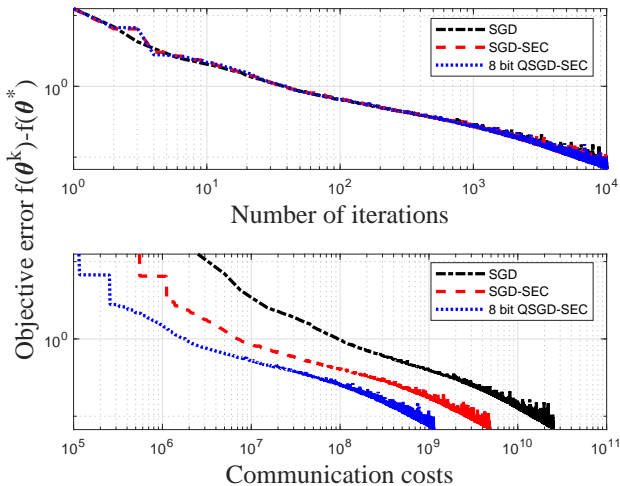}
\caption{\textcolor[rgb]{0.00,0.00,0.00}{Objective error versus the number of iterations and the communication costs for linear regression in the MNIST dataset  (using 6000 data samples).}}
\label{conv_figstochastic}
\end{figure}

\section{Conclusions}\label{Conclusion}

We proposed \textcolor[rgb]{0.00,0.00,0.00}{GD-SEC}, a communication-efficient \textcolor[rgb]{0.00,0.00,0.00}{enhanced} version of the classical GD method, to solve distributed learning problems. \textcolor[rgb]{0.00,0.00,0.00}{GD-SEC} is able to significantly reduce \textcolor[rgb]{0.00,0.00,0.00}{the number of transmitted bits} compared to GD without sacrificing convergence, 
{which is suitable for systems where wireless communication cost is the bottleneck on overall performance.} There are three important ingredients to \textcolor[rgb]{0.00,0.00,0.00}{attaining} this favorable performance, adaptive sparsification, error correction, and state variables. Adaptive sparsification prevents each  worker from transmitting less informative entries of the gradient and so reduces the number of transmitted bits. In order to \textcolor[rgb]{0.00,0.00,0.00}{avoid} the case where certain entries are never updated,  error correction \textcolor[rgb]{0.00,0.00,0.00}{is} employed to accumulate the error from the past and add it back to the current gradient before sparsification. A state variable was used at each worker to locally smooth the previously transmitted gradients to make \textcolor[rgb]{0.00,0.00,0.00}{GD-SEC} \textcolor[rgb]{0.00,0.00,0.00}{more smoothly} follow the consistent descent direction.
We have \textcolor[rgb]{0.00,0.00,0.00}{shown} theoretically that \textcolor[rgb]{0.00,0.00,0.00}{GD-SEC will maintain the same order of the convergence rate as GD for strongly convex, convex, and nonconvex objective functions. Numerical results validate the convergence and significant communication bit savings of GD-SEC.} \textcolor[rgb]{0.00,0.00,0.00}{One possible future direction is to combine GD-SEC with parameter pruning to further reduce the communication {load}. \textcolor[rgb]{0.00,0.00,0.00}{We also showed by example that a stochastic version (SGD-SEC) can perform well. } }


\appendices

\section{ }\label{prooflemma1}
In the appendix, we provide the proofs of the lemmas and theorems in the main document. Before we get into the detailed proofs, we first establish the following descent lemma of \textcolor[rgb]{0.00,0.00,0.00}{GD-SEC} which will help us develop the proof of Lemma \ref{descentLyapunovfunv}.

\begin{lemma}\label{descent}
Under Assumption \ref{OHBassumption1}, suppose $\boldsymbol{\theta}_{k+1}$ is generated by running one-step \textcolor[rgb]{0.00,0.00,0.00}{GD-SEC} iteration given $\boldsymbol{\theta}_{k}$. Then the objective function satisfies
\begin{align}\label{descentfunct}
 &f(\boldsymbol{\theta}^{k+1})- f(\boldsymbol{\theta}^{k}) \le \frac{\alpha}{2}\left\| \sum_{m=1}^M \left( \hat{\Delta}_m^k - {\Delta}_m^k + {\boldsymbol{e}}_m^k \right) \right\|^{2}\notag\\
 &\quad-\frac{\alpha}{2}\left\|\nabla f\left(\boldsymbol{\theta}^{k}\right)\right\|^{2}+\left(\frac{L}{2}-\frac{1}{2 \alpha}\right)\left\|\boldsymbol{\theta}^{k+1}-\boldsymbol{\theta}^{k}\right\|^{2}.
\end{align}
\end{lemma}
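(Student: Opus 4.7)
The plan is to derive (\ref{descentfunct}) by starting from the smoothness inequality for $f$ and carefully reshuffling the inner product between $\nabla f(\boldsymbol{\theta}^k)$ and the step $\boldsymbol{\theta}^{k+1}-\boldsymbol{\theta}^k$ so that a $-\frac{\alpha}{2}\|\nabla f(\boldsymbol{\theta}^k)\|^2$ term and a nonnegative sparsification/error remainder appear naturally. Assumption \ref{OHBassumption1} guarantees that $f$ is $L$-smooth, so that
\[
f(\boldsymbol{\theta}^{k+1}) - f(\boldsymbol{\theta}^{k}) \leq \nabla f(\boldsymbol{\theta}^{k})^{\top}(\boldsymbol{\theta}^{k+1}-\boldsymbol{\theta}^{k}) + \tfrac{L}{2}\|\boldsymbol{\theta}^{k+1}-\boldsymbol{\theta}^{k}\|^{2},
\]
which already accounts for the last term in (\ref{descentfunct}). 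The remaining work is entirely algebraic manipulation of the linear term.

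Next, I would rewrite the GD-SEC update in a form that exposes $\nabla f(\boldsymbol{\theta}^k)$. Using $\Delta_m^k = \nabla f_m(\boldsymbol{\theta}^k) - \boldsymbol{h}_m^k + \boldsymbol{e}_m^k$, one gets
\[
\boldsymbol{h}^k + \hat{\Delta}^k \;=\; \sum_{m=1}^{M}\bigl(\boldsymbol{h}_m^k + \Delta_m^k + (\hat{\Delta}_m^k - \Delta_m^k)\bigr) \;=\; \nabla f(\boldsymbol{\theta}^k) + \boldsymbol{r}^k,
\]
where I introduce the shorthand $\boldsymbol{r}^k := \sum_{m=1}^{M}(\hat{\Delta}_m^k - \Delta_m^k + \boldsymbol{e}_m^k)$ so that the update (\ref{updateruleSpar}) reads $\boldsymbol{\theta}^{k+1}-\boldsymbol{\theta}^k = -\alpha\bigl(\nabla f(\boldsymbol{\theta}^k) + \boldsymbol{r}^k\bigr)$. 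This is the key identification: the quantity inside the first norm on the right-hand side of (\ref{descentfunct}) is precisely the deviation of the aggregated sparsified update from the true full gradient.

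The main step is then the identity $2\boldsymbol{a}^\top \boldsymbol{b} = \|\boldsymbol{a}+\boldsymbol{b}\|^2 - \|\boldsymbol{a}\|^2 - \|\boldsymbol{b}\|^2$ applied to $\boldsymbol{a}=\nabla f(\boldsymbol{\theta}^k)$ and $\boldsymbol{b}=\boldsymbol{r}^k$. Together with $\nabla f(\boldsymbol{\theta}^k) + \boldsymbol{r}^k = -(\boldsymbol{\theta}^{k+1}-\boldsymbol{\theta}^k)/\alpha$, this yields
\[
\nabla f(\boldsymbol{\theta}^{k})^{\top}(\boldsymbol{\theta}^{k+1}-\boldsymbol{\theta}^{k}) \;=\; -\tfrac{\alpha}{2}\|\nabla f(\boldsymbol{\theta}^{k})\|^{2} - \tfrac{1}{2\alpha}\|\boldsymbol{\theta}^{k+1}-\boldsymbol{\theta}^{k}\|^{2} + \tfrac{\alpha}{2}\|\boldsymbol{r}^k\|^{2}.
\]
Substituting this into the smoothness bound and collecting terms on $\|\boldsymbol{\theta}^{k+1}-\boldsymbol{\theta}^{k}\|^{2}$ produces exactly (\ref{descentfunct}).

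I expect no serious obstacle: the argument is essentially a one-line smoothness bound followed by completing the square. The only care needed is bookkeeping, specifically, correctly absorbing the cross term $\nabla f(\boldsymbol{\theta}^k)^\top \boldsymbol{r}^k$ via the identity above rather than a cruder Young's inequality, since using an inequality here would produce a looser constant and weaken the downstream Lyapunov argument in Lemma \ref{descentLyapunovfunv}. Once this identification of $\boldsymbol{r}^k$ and the use of the exact polarization identity are in place, the claim follows directly.
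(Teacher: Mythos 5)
Your proposal is correct and follows essentially the same route as the paper's proof: the $L$-smoothness bound, rewriting the GD-SEC step as $-\alpha(\nabla f(\boldsymbol{\theta}^k)+\boldsymbol{r}^k)$ with $\boldsymbol{r}^k=\sum_{m}(\hat{\Delta}_m^k-\Delta_m^k+\boldsymbol{e}_m^k)$, and handling the cross term with the exact polarization identity rather than Young's inequality. The paper organizes the same algebra via $\langle \boldsymbol{x},\boldsymbol{y}\rangle=\tfrac{1}{2}\|\boldsymbol{x}\|^2+\tfrac{1}{2}\|\boldsymbol{y}\|^2-\tfrac{1}{2}\|\boldsymbol{x}-\boldsymbol{y}\|^2$, which is the same identity you use, so there is nothing to add.
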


\begin{proof}
Recall the smoothness constant $L$ defined in Assumption \ref{OHBassumption1}.  One property of the $L$-smooth function $f(\boldsymbol{\theta})$ is that \cite{nesterov2013introductory}
\begin{align}\label{smothfun}
f\left(\boldsymbol{\theta}^{k+1}\right)-f\left(\boldsymbol{\theta}^{k}\right) &\leq\left\langle\nabla f\left(\boldsymbol{\theta}^{k}\right), \boldsymbol{\theta}^{k+1}-\boldsymbol{\theta}^{k}\right\rangle\notag\\
&+\frac{L}{2}\left\|\boldsymbol{\theta}^{k+1}-\boldsymbol{\theta}^{k}\right\|^{2}.
\end{align}
Plugging (\ref{updateruleSpar}) into $\left\langle\nabla f\left(\boldsymbol{\theta}^{k}\right), \boldsymbol{\theta}^{k+1}-\boldsymbol{\theta}^{k}\right\rangle$, it follows that
\begin{align}
&\left\langle\nabla f\left(\boldsymbol{\theta}^{k}\right), \boldsymbol{\theta}^{k+1}-\boldsymbol{\theta}^{k}\right\rangle\nonumber\\
&\stackrel{\textcolor[rgb]{0.00,0.00,0.00}{(\ref{updateruleSpar})}}{=}-\alpha\left\langle\nabla f\left(\boldsymbol{\theta}^{k}\right),  \sum_{m=1}^M\left(\hat{\Delta}_m^k + {\boldsymbol{h}}_m^{k}\right)
\right\rangle\nonumber\\
&=-\alpha\left\langle\nabla f\left(\boldsymbol{\theta}^{k}\right), \nabla f\left(\boldsymbol{\theta}^{k}\right) +  \sum_{m=1}^M\left(\hat{\Delta}_m^k -{\Delta}_m^k + {\boldsymbol{e}}_m^k \right)
\right\rangle\nonumber\\
&=-\alpha\left\|\nabla f\left(\boldsymbol{\theta}^{k}\right)\right\|^2 \notag\\
&\qquad+ \alpha \left\langle\nabla f\left(\boldsymbol{\theta}^{k}\right), \sum_{m=1}^M\left(-\hat{\Delta}_m^k +{\Delta}_m^k - {\boldsymbol{e}}_m^k \right)
\right\rangle\label{laststepinner}\\
&=-\frac{\alpha}{2}\left\| \nabla f\left(\boldsymbol{\theta}^{k}\right)\right\|^2 + \frac{\alpha}{2}\left\| \sum_{m=1}^M \left( \hat{\Delta}_m^k - {\Delta}_m^k + {\boldsymbol{e}}_m^k \right)\right\|^2\notag\\
&\qquad-\frac{\alpha}{2}\left\| \nabla f\left(\boldsymbol{\theta}^{k}\right) +  \sum_{m=1}^M\left(\hat{\Delta}_m^k -{\Delta}_m^k + {\boldsymbol{e}}_m^k \right)   \right\|^2\label{innerprodemix}\\
&=-\frac{\alpha}{2}\left\| \nabla f\left(\boldsymbol{\theta}^{k}\right)\right\|^2 + \frac{\alpha}{2}\left\|\sum_{m=1}^M \left( \hat{\Delta}_m^k - {\Delta}_m^k + {\boldsymbol{e}}_m^k\right) \right\|^2\notag\\
&\qquad-\frac{1}{2\alpha}\left\|\boldsymbol{\theta}^{k+1}-\boldsymbol{\theta}^{k}
\right\|^2.\label{breakinnerpro}
\end{align}
In going from (\ref{laststepinner}) to (\ref{innerprodemix}), we use the equality $\left\langle \boldsymbol x, \boldsymbol y\right\rangle=\frac{1}{2}\|\boldsymbol x\|^2+\frac{1}{2}\|\boldsymbol y\|^2-\frac{1}{2}\|\boldsymbol{x-y}\|^2$ for the last term in (\ref{laststepinner}). The result in Lemma \ref{descent} follows after plugging (\ref{breakinnerpro}) into (\ref{smothfun}).
\end{proof}

\section{Proof of Lemma \ref{descentLyapunovfunv}}\label{decreaseproperty}

\begin{proof}

Using the definition of $\mathbb{L}^{k}$ in (\ref{Laypufunc}), it follows that
\begin{align}
&\mathbb{L}^{k+1}-\mathbb{L}^{k}\notag\\
&=f\left(\boldsymbol{\theta}^{k+1}\right)-f\left(\boldsymbol{\theta}^{k}\right)+\beta_1\left\|\boldsymbol{\theta}^{k+1}-\boldsymbol{\theta}^{k}\right\|^2 \notag\\
&\qquad+(\beta_2-\beta_1)\left\|\boldsymbol{\theta}^{k}-\boldsymbol{\theta}^{k-1}\right\|^2- \beta_2\left\|\boldsymbol{\theta}^{k-1}-\boldsymbol{\theta}^{k-2}\right\|^2\nonumber\\
&\stackrel{\textcolor[rgb]{0.00,0.00,0.00}{(\ref{descentfunct})}}{\le}-\frac{\alpha}{2}\left\|\nabla f\left(\boldsymbol{\theta}^{k}\right)\right\|^{2}+\frac{\alpha}{2}\left\| \sum_{m=1}^M \left( \hat{\Delta}_m^k - {\Delta}_m^k + {\boldsymbol{e}}_m^k \right) \right\|^{2}\notag\\
&\qquad+\left(\frac{L}{2}-\frac{1}{2 \alpha} + \beta_1\right)\left\|\boldsymbol{\theta}^{k+1}-\boldsymbol{\theta}^{k}\right\|^{2}\nonumber\\
&\qquad+(\beta_2-\beta_1)\left\|\boldsymbol{\theta}^{k}-\boldsymbol{\theta}^{k-1}\right\|^2- \beta_2\left\|\boldsymbol{\theta}^{k-1}-\boldsymbol{\theta}^{k-2}\right\|^2\label{plugdescentinequality}.
\end{align}

\textcolor[rgb]{0.00,0.00,0.00}{After plugging $\Delta_m^k:=\nabla f_m({\boldsymbol{\theta}}^k)-{\boldsymbol{h}}_m^k+{\boldsymbol{e}}_m^k$ into (\ref{updateruleSpar}),  the \textcolor[rgb]{0.00,0.00,0.00}{GD-SEC} update rule can be rewritten as
\begin{align}\label{updategradientpara}
\boldsymbol{\theta}^{k+1}
&= \boldsymbol{\theta}^{k}-\alpha\nabla f\left(\boldsymbol{\theta}^{k}\right) - {\alpha} \sum_{m=1}^M\left(\hat{\Delta}_m^k -{\Delta}_m^k + {\boldsymbol{e}}_m^k \right),
\end{align}
where $\nabla f(\boldsymbol{\theta}^{k})$ is the gradient of the objective function in (\ref{basicprobNSF2020}).}

From (\ref{updategradientpara}), we have
\begin{align}
&\left\|\boldsymbol{\theta}^{k+1}-\boldsymbol{\theta}^{k}\right\|^{2}\notag\\
&=\left\| -\alpha\nabla f\left(\boldsymbol{\theta}^{k}\right) - {\alpha} \sum_{m=1}^M\left(\hat{\Delta}_m^k -{\Delta}_m^k + {\boldsymbol{e}}_m^k \right)  \right\|^{2}\label{younfqin0}\\
&\le(1+\rho)\alpha^2\left\|  \nabla f\left(\boldsymbol{\theta}^{k}\right) \right\|^{2}\notag\\
&\quad\quad + (1+\rho^{-1})\alpha^2\left\|   \sum_{m=1}^M\left(\hat{\Delta}_m^k -{\Delta}_m^k + {\boldsymbol{e}}_m^k \right) \right\|^{2}.\label{younfqin1}
\end{align}
In going from (\ref{younfqin0}) to (\ref{younfqin1}), we employ Young's inequality: $\|\boldsymbol{x+y}\|^2\le(1+\rho)\|\boldsymbol{x}\|^2+(1+\rho^{-1})\|\boldsymbol{y}\|^2, \forall\rho>0$. Plugging (\ref{younfqin1}) into (\ref{plugdescentinequality}) with the requirement $(\frac{L}{2}-\frac{1}{2 \alpha} + \beta_1)\ge0$, we obtain
\begin{align}
&\mathbb{L}^{k+1}-\mathbb{L}^{k}\notag\\
&\le\left(-\frac{\alpha}{2} + (\frac{L}{2}-\frac{1}{2 \alpha}+\beta_1)(1+\rho)\alpha^2  \right)\left\|\nabla f\left(\boldsymbol{\theta}^{k}\right)\right\|^{2}\nonumber\\
&\quad+\left( \frac{\alpha}{2} + (\frac{L}{2}-\frac{1}{2 \alpha}+\beta_1)(1+\rho^{-1})\alpha^2 \right)\Big\Vert \sum_{m=1}^M\big(\hat{\Delta}_m^k \notag\\
&\quad\quad\quad-{\Delta}_m^k + {\boldsymbol{e}}_m^k \big) \Big\Vert ^{2}\nonumber\\
&\quad+(\beta_2-\beta_1)\left\|\boldsymbol{\theta}^{k}-\boldsymbol{\theta}^{k-1}\right\|^2- \beta_2\left\|\boldsymbol{\theta}^{k-1}-\boldsymbol{\theta}^{k-2}\right\|^2.\label{Laydesc}
\end{align}
Employing Young's inequality again, it follows that
\begin{align}
&\left\|  \sum_{m=1}^M\left(\hat{\Delta}_m^k -{\Delta}_m^k + {\boldsymbol{e}}_m^k \right) \right\|^{2}\notag\\
&\le (1+\rho_2)\left\| \sum_{m=1}^M\left(\hat{\Delta}_m^k -{\Delta}_m^k  \right) \right\|^{2} + (1+\rho_2^{-1})\left\| \sum_{m=1}^M {\boldsymbol{e}}_m^k  \right\|^{2}\\
&\le (1+\rho_2)\xi^2\left\| \boldsymbol{\theta}^{k}-\boldsymbol{\theta}^{k-1} \right\|^2 + (1+\rho_2^{-1})\xi^2\left\| \boldsymbol{\theta}^{k-1}-\boldsymbol{\theta}^{k-2} \right\|^2,\label{thresholdupdating}
\end{align}
where $\xi=\max_i\xi_i$ and (\ref{thresholdupdating}) is obtained using the \textcolor[rgb]{0.00,0.00,0.00}{GD-SEC}-sparsify-transmission condition in (\ref{ASGD-stopping condition}).
{Plugging} (\ref{thresholdupdating}) into (\ref{Laydesc}) with $\frac{L}{2}-\frac{1}{2 \alpha}+\beta_1>0$, we have
\begin{align}\label{decreasingpropertyofLay}
&\mathbb{L}^{k+1}-\mathbb{L}^{k}\notag\\
&\le\left(-\frac{\alpha}{2} + (\frac{L}{2}-\frac{1}{2 \alpha}+\beta_1)(1+\rho)\alpha^2  \right)\left\|\nabla f\left(\boldsymbol{\theta}^{k}\right)\right\|^{2}\nonumber\\
&\quad+\Bigg(  (1+\rho_2)\xi^2\Big( \frac{\alpha}{2} + (\frac{L}{2}-\frac{1}{2 \alpha}+\beta_1)(1+\rho^{-1})\alpha^2 \Big)\notag\\
&\qquad\qquad +\beta_2-\beta_1 \Bigg)\left\| \boldsymbol{\theta}^{k}-\boldsymbol{\theta}^{k-1} \right\|^2\notag\\
&\quad+\Bigg( (1+\rho_2^{-1})\xi^2\left( \frac{\alpha}{2} + (\frac{L}{2}-\frac{1}{2 \alpha}+\beta_1)(1+\rho^{-1})\alpha^2 \right)\notag\\
&\qquad\qquad-\beta_2 \Bigg)\left\| \boldsymbol{\theta}^{k-1}-\boldsymbol{\theta}^{k-2} \right\|^2.
\end{align}
Furthermore, if the step size $\alpha$, the constants $\xi$ and $\beta$ satisfy
\begin{align}
\frac{L}{2}-\frac{1}{2 \alpha}+\beta_1&\ge 0,\label{conefpositveLay}\\
-\frac{\alpha}{2} + (\frac{L}{2}-\frac{1}{2 \alpha}+\beta_1)(1+\rho)\alpha^2&\le 0,\label{123conefnegaLay}\\
(1+\rho_2)\xi^2\Bigg( \frac{\alpha}{2} + (\frac{L}{2}-\frac{1}{2 \alpha}+\beta_1)(1+\rho^{-1})\alpha^2 \Bigg)&\notag\\
\qquad+\beta_2-\beta_1&\le 0,\\
(1+\rho_2^{-1})\xi^2\Big( \frac{\alpha}{2} + (\frac{L}{2}-\frac{1}{2 \alpha}+\beta_1)(1+\rho^{-1})\alpha^2 \Big)&\notag\\
-\beta_2&\le 0,\label{conefpositveLayend}
\end{align}
then Lyapunov function is non-increasing; that is, $\mathbb{L}^{k+1}\le\mathbb{L}^{k}$. The proof is complete after defining the non-negative constants $\sigma_0$, $\sigma_1$ and $\sigma_2$ as shown in (\ref{DDScondition1})--(\ref{condition3}), respectively. Since the expressions of $\sigma_0$, $\sigma_2$ and $\sigma_2$ are complicated, we provide several choices of parameters in the following.

\paragraph{Choice of parameters.} The conditions (\ref{conefpositveLay})--(\ref{conefpositveLayend}) are equivalent to
\begin{align}
&\frac{1}{2\beta_1+L}\le\alpha\le\frac{2+\rho_1}{(2\beta_1+L)(1+\rho)}, \label{equalconditionsdic1}\\
& \xi\le \sqrt{\frac{2(\beta_1-\beta_2)}{(1+\rho_2)[\alpha+(\alpha L-1+2\alpha\beta_1)(1+\rho^{-1})\alpha]}}\label{equalconditionsdic2},\\
&\xi\le \sqrt{\frac{2\beta_2}{(1+\rho_2^{-1})[\alpha+(\alpha L-1+2\alpha\beta_1)(1+\rho^{-1})\alpha]}}\label{equalconditionsdic3}.
\end{align}

$\bullet$ If $\beta_1=\frac{1-\alpha L}{2\alpha}$, then (\ref{equalconditionsdic1})--(\ref{equalconditionsdic3}) is equivalent to
\begin{align}
\alpha&\le\frac{1}{L}, \quad \xi\le\min\Big\{\sqrt{\frac{2(\beta_1-\beta_2)}{(1+\rho_2)\alpha}}, \sqrt{\frac{2\beta_2}{(1+\rho_2^{-1})\alpha}}\Big\}.
\end{align}

$\bullet$ If $\beta_1\rightarrow0$, $\beta_2\rightarrow0$, $\rho\rightarrow0$, and $\rho_1\rightarrow0$, then (\ref{equalconditionsdic1})--(\ref{equalconditionsdic3}) degenerate to $1/L\le\alpha\le 2/L$ and $\xi=0$ which implies \textcolor[rgb]{0.00,0.00,0.00}{GD-SEC} reduces to the classical GD method in this case.

$\bullet$ If $\alpha=1/L$, then (\ref{equalconditionsdic1})--(\ref{equalconditionsdic3}) are equivalent to
\begin{align}
&0\le\beta_1\le\frac{L}{2(1+\rho)},\notag\\
&\xi\le\min\Bigg\{\sqrt{\frac{2L^2(\beta_1-\beta_2)}{(1+\rho_2)[L+2\beta_1(1+\rho^{-1})]}}, \notag\\ &\ \qquad\qquad\sqrt{\frac{2L^2\beta_2}{(1+\rho_2)[L+2\beta_1(1+\rho^{-1})]}}\Bigg\}.
\end{align}
\end{proof}

\section{Proof of Theorem \ref{stonglyconvextheo}}\label{linearstronglyconvex}

\begin{proof}
Using strong convexity in Assumption \ref{OHBassumption2}, $f(\boldsymbol{\theta})$ has the following property \cite{nesterov2013introductory}
\begin{align}\label{stonglyconvex}
2 \mu\left(f\left(\boldsymbol{\theta}^{k}\right)-f\left(\boldsymbol{\theta}^{*}\right)\right) \leq\left\|\nabla f\left(\boldsymbol{\theta}^{k}\right)\right\|^{2}.
\end{align}
Plugging (\ref{stonglyconvex}) into (\ref{decreasingpropertyofLay}) with $\sigma_0\ge0$ in (\ref{DDScondition1}) and $\beta_1\beta_2\ne0$, it follows that
\begin{align}
&\mathbb{L}^{k+1}-\mathbb{L}^{k}\notag\\
&\le -\left(\alpha\mu-(L-\frac{1}{\alpha}+2\beta_1)(1+\rho)\alpha^2\mu\right)\left(f\left(\boldsymbol{\theta}^{k}\right)-f\left(\boldsymbol{\theta}^{*}\right)\right)\nonumber\\
&\quad-\Bigg(- \frac{(1+\rho_2)\xi^2}{\beta_1}\Big( \frac{\alpha}{2} + (\frac{L}{2}-\frac{1}{2 \alpha}+\beta_1)(1+\rho^{-1})\alpha^2 \Big) \notag\\
& \qquad\qquad +  1 -\frac{\beta_2}{\beta_1} \Bigg)\beta_1\left\| \boldsymbol{\theta}^{k}-\boldsymbol{\theta}^{k-1} \right\|^2\\
&\quad-\Bigg(- \frac{(1+\rho_2^{-1})\xi^2}{\beta_2}\Big( \frac{\alpha}{2} + (\frac{L}{2}-\frac{1}{2 \alpha}+\beta_1)(1+\rho^{-1})\alpha^2 \Big) \notag\\
&\qquad\qquad+1\Bigg)\beta_2\left\| \boldsymbol{\theta}^{k-1}-\boldsymbol{\theta}^{k-2} \right\|^2.
\end{align}

Let us define $c\left(\alpha ;\xi\right)$ as
\begin{align}\label{coeffczeroone}
&c\left(\alpha ;\xi\right):=\min\Bigg\{\alpha\mu-(L-\frac{1}{\alpha}+2\beta_1)(1+\rho)\alpha^2\mu,\nonumber\\
& 1 -\frac{\beta_2}{\beta_1}- \frac{(1+\rho_2)\xi^2}{\beta_1}\left( \frac{\alpha}{2} + (\frac{L}{2}-\frac{1}{2 \alpha}+\beta_1)(1+\rho^{-1})\alpha^2 \right),\nonumber\\
& 1 - \frac{(1+\rho_2^{-1})\xi^2}{\beta_2}\left( \frac{\alpha}{2} + (\frac{L}{2}-\frac{1}{2 \alpha}+\beta_1)(1+\rho^{-1})\alpha^2 \right)  \Bigg\}\\
&=\min\Big\{ 2\sigma_0\mu, \sigma_1/\beta_1, \sigma_2/\beta_2\Big\},\label{coeffczerooneSimple}
\end{align}
where (\ref{coeffczerooneSimple}) follows using the definitions of $\sigma_0$, $\sigma_1$ and $\sigma_2$  in (\ref{DDScondition1})--(\ref{condition3}), respectively.
With the definition of $c\left(\alpha,\xi\right)$ in (\ref{coeffczeroone}), we obtain
\begin{align}
&\mathbb{L}^{k+1}-\mathbb{L}^{k}\notag\\
&\le -c\left(\alpha ;\xi\right) \Big(f\left(\boldsymbol{\theta}^{k}\right)-f\left(\boldsymbol{\theta}^{*}\right)+ \beta_1\left\|\boldsymbol{\theta}^{k}-\boldsymbol{\theta}^{k-1}\right\|^{2} \notag\\ &\qquad\qquad+\beta_2\left\|\boldsymbol{\theta}^{k-1}-\boldsymbol{\theta}^{k-2}\right\|^{2} \Big)\\
&=-c\left(\alpha ;\xi\right)\mathbb{L}^{k}
\end{align}
which implies
\begin{align}
\mathbb{L}^{k+1}
=\left(1-c\left(\alpha ;\xi\right)\right)\mathbb{L}^{k}.
\end{align}
This completes the proof. Now we consider a simple case where we let $\beta_1-\frac{1-\alpha L}{2\alpha}=0$ with $\beta_1\ne0$. Then $c\left(\alpha,\xi\right)$ defined in (\ref{coeffczeroone}) becomes
\begin{align}\label{constantcpara}
&c\left(\alpha,\xi\right)=\min\Bigg\{ {\alpha}\mu,
1 -\frac{\beta_2}{\beta_1}- \frac{(1+\rho_2)\alpha\xi^2}{2\beta_1},\notag\\
&\qquad\qquad\qquad\qquad 1 - \frac{(1+\rho_2^{-1})\alpha\xi^2}{2\beta_2}\Bigg\}.
\end{align}
If we choose $\rho_2=1$, $\delta\in(0,1)$, $\alpha=\frac{1-\delta}{L}$, $\xi^2\le\frac{1-\alpha\mu}{\alpha}$,
\begin{align}
 \beta_2=\frac{\alpha\xi^2}{1-\alpha\mu}, \quad
\mbox{and}\ \beta_1=\beta_2+\frac{1}{1-\alpha\mu},
\end{align}
then $c\left(\alpha,\xi\right)$ in (\ref{constantcpara}) becomes
\begin{align}
c\left(\alpha,\xi\right)&=\min\Bigg\{ \alpha\mu, 1-\frac{\alpha\xi^2(1+1/(1-\alpha\mu))}{\beta_1}\Bigg\}\notag\\
&=\alpha\mu.
\end{align}
This implies
\begin{align}\label{iterationcomp}
\frac{\mathbb{L}^{k+1}}{\mathbb{L}^{1}}\le\left(1- \frac{1-\delta}{L/\mu}\right)^k\le\epsilon,
\end{align}
\textcolor[rgb]{0.00,0.00,0.00}{where $\epsilon$ is a user defined error parameter with its value typically much less than one.}
After rearranging terms in (\ref{iterationcomp}), we can obtain
\begin{align}\label{}
\log(\frac{1}{\epsilon})\le k\log\left(1+\frac{1}{\frac{L/\mu}{1-\delta}-1}\right)\le  \frac{k}{\frac{L/\mu}{1-\delta}-1}
\end{align}
which implies the iteration complexity $\mathbb{I}_{\textcolor[rgb]{0.00,0.00,0.00}{GD-SEC}}(\epsilon)$ is
\begin{align}\label{}
\mathbb{I}_{\textcolor[rgb]{0.00,0.00,0.00}{GD-SEC}}(\epsilon)= \frac{L/\mu}{1-\delta}\log(\frac{1}{\epsilon}).
\end{align}
\textcolor[rgb]{0.00,0.00,0.00}{This indicates that for any given error $\epsilon$, there exists a large $k=\mathbb{I}_{\textcolor[rgb]{0.00,0.00,0.00}{GD-SEC}}(\epsilon)$ such that (\ref{iterationcomp}) is always satisfied.}
\end{proof}

\section{Proof of Theorem \ref{convextheo}}\label{convergeconv}
\begin{proof}
The analysis in this part is analogous to that in \cite{chen2018lag} that does not consider adaptive sparsification and error correction. Before providing the proof, we first present a lemma that will be helpful to derive the result in Theorem \ref{convextheo}.
\begin{lemma}\label{lyapunvfunctinequa}
Under \emph{Assumption \ref{OHBassumption3}}, the Lyapunov function $\mathbb{L}^{k}$ satisfies
\begin{align}
&\mathbb{L}^{k}\notag\\
&\le \sqrt{\Big(\|\nabla f(\boldsymbol{\theta}^k)\|^2 + \beta_{1}\|\boldsymbol{\theta}^{k}-\boldsymbol{\theta}^{k-1}\|^{2} + \beta_{2}\|\boldsymbol{\theta}^{k-1}-\boldsymbol{\theta}^{k-2}\|^{2} \Big)}\notag\\
&\sqrt{\Big(\|\boldsymbol{\theta}^k-\boldsymbol{\theta}^*\|^2 + \beta_{1}\|\boldsymbol{\theta}^{k}-\boldsymbol{\theta}^{k-1}\|^{2} + \beta_{2}\|\boldsymbol{\theta}^{k-1}-\boldsymbol{\theta}^{k-2}\|^{2}\Big)}.
\end{align}
\end{lemma}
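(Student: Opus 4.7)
The plan is to bound $\mathbb{L}^k$ term by term and then recognize the resulting expression as an inner product that can be controlled by Cauchy--Schwarz applied to appropriately chosen vectors. First I would use convexity of $f$ (Assumption \ref{OHBassumption3}) together with the standard first-order characterization $f(\boldsymbol{\theta}^*) \ge f(\boldsymbol{\theta}^k) + \langle \nabla f(\boldsymbol{\theta}^k), \boldsymbol{\theta}^* - \boldsymbol{\theta}^k\rangle$ to obtain
\begin{align}
f(\boldsymbol{\theta}^k) - f(\boldsymbol{\theta}^*) \le \langle \nabla f(\boldsymbol{\theta}^k), \boldsymbol{\theta}^k - \boldsymbol{\theta}^*\rangle \le \|\nabla f(\boldsymbol{\theta}^k)\|\,\|\boldsymbol{\theta}^k - \boldsymbol{\theta}^*\|,
\end{align}
where the second step is Cauchy--Schwarz. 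Substituting into the definition of $\mathbb{L}^k$ in (\ref{Laypufunc}) yields
\begin{align}
\mathbb{L}^k \le \|\nabla f(\boldsymbol{\theta}^k)\|\,\|\boldsymbol{\theta}^k - \boldsymbol{\theta}^*\| + \beta_1 \|\boldsymbol{\theta}^k - \boldsymbol{\theta}^{k-1}\|^2 + \beta_2 \|\boldsymbol{\theta}^{k-1} - \boldsymbol{\theta}^{k-2}\|^2.
\end{align}

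Next I would set $a = \|\nabla f(\boldsymbol{\theta}^k)\|$, $b = \|\boldsymbol{\theta}^k - \boldsymbol{\theta}^*\|$, $c_1 = \sqrt{\beta_1}\,\|\boldsymbol{\theta}^k - \boldsymbol{\theta}^{k-1}\|$, and $c_2 = \sqrt{\beta_2}\,\|\boldsymbol{\theta}^{k-1} - \boldsymbol{\theta}^{k-2}\|$, so that the previous display becomes $\mathbb{L}^k \le ab + c_1^2 + c_2^2$. The remaining task is to verify the elementary inequality
\begin{align}
ab + c_1^2 + c_2^2 \le \sqrt{a^2 + c_1^2 + c_2^2}\cdot \sqrt{b^2 + c_1^2 + c_2^2}.
\end{align}
This is just Cauchy--Schwarz in $\mathbb{R}^3$ applied to the vectors $(a, c_1, c_2)^{\top}$ and $(b, c_1, c_2)^{\top}$. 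Substituting back the expressions for $a, b, c_1, c_2$ yields exactly the stated bound.

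The proof is thus mostly mechanical; the only conceptual choice is identifying the right three-dimensional vectors so that the cross term $ab$ and the two squared contributions $c_1^2$ and $c_2^2$ appear simultaneously as the inner product. I do not anticipate any real obstacle, since convexity handles the function-gap term cleanly and the rest is a direct application of Cauchy--Schwarz. If one wanted a tighter or alternative bound, one could try using the $\mu$-strong convexity inequality in place of plain convexity, but Assumption \ref{OHBassumption3} only gives convexity, and the stated lemma is exactly what is needed to feed into the $\mathcal{O}(1/k)$ rate of Theorem \ref{convextheo}.
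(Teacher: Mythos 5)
Your proof is correct and follows essentially the same route as the paper: bound the function-gap term by convexity, then use Cauchy--Schwarz to factor the resulting sum into the product of the two square roots. The only cosmetic difference is that you apply Cauchy--Schwarz twice (once to $\langle \nabla f(\boldsymbol{\theta}^k), \boldsymbol{\theta}^k-\boldsymbol{\theta}^*\rangle$ in $\mathbb{R}^d$ and once to the scalar triples $(a,c_1,c_2)$, $(b,c_1,c_2)$ in $\mathbb{R}^3$), whereas the paper applies it once to the concatenated $(d+2)$-dimensional vectors $\big[\nabla f(\boldsymbol{\theta}^{k})^{\top},\sqrt{\beta_1}\|\boldsymbol{\theta}^{k}-\boldsymbol{\theta}^{k-1}\|, \sqrt{\beta_2}\|\boldsymbol{\theta}^{k-1}-\boldsymbol{\theta}^{k-2}\|\big]^{\top}$ and $\big[(\boldsymbol{\theta}^{k}-\boldsymbol{\theta}^{*})^{\top},\sqrt{\beta_1}\|\boldsymbol{\theta}^{k}-\boldsymbol{\theta}^{k-1}\|, \sqrt{\beta_2}\|\boldsymbol{\theta}^{k-1}-\boldsymbol{\theta}^{k-2}\|\big]^{\top}$, yielding the identical bound.
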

\begin{proof}
Under {Assumption \ref{OHBassumption3}}, the objective function $ f(\boldsymbol{\theta})$ is convex which implies
\begin{align}\label{conveproer12}
f(\boldsymbol{\theta}^k)-f(\boldsymbol{\theta}^*)\le\left\langle\nabla f(\boldsymbol{\theta}^{k}), \boldsymbol{\theta}^{k}-\boldsymbol{\theta}^{*}\right\rangle.
\end{align}
Then we have
\begin{align}
&\mathbb{L}^{k}=f(\boldsymbol{\theta}^{k})-f\left(\boldsymbol{\theta}^{*}\right) + {\beta}_1 \left\|\boldsymbol{\theta}^{k}-\boldsymbol{\theta}^{k-1}\right\|^{2}\notag\\
&\qquad\qquad\quad+ {\beta}_2 \left\|\boldsymbol{\theta}^{k-1}-\boldsymbol{\theta}^{k-2}\right\|^{2}\notag\\
&\le \left\langle\nabla f(\boldsymbol{\theta}^{k}), \boldsymbol{\theta}^{k}-\boldsymbol{\theta}^{*}\right\rangle \notag\\
&\qquad + \left\langle \sqrt{\beta_1}\|\boldsymbol{\theta}^{k}-\boldsymbol{\theta}^{k-1}\|, \sqrt{\beta_1}\|\boldsymbol{\theta}^{k}-\boldsymbol{\theta}^{k-1}\| \right\rangle \notag\\
&\qquad+ \left\langle \sqrt{\beta_2}\|\boldsymbol{\theta}^{k-1}-\boldsymbol{\theta}^{k-2}\|, \sqrt{\beta_2}\|\boldsymbol{\theta}^{k-1}-\boldsymbol{\theta}^{k-2}\| \right\rangle\label{convexproperty1}\\
&=\Big\langle \left[\nabla f(\boldsymbol{\theta}^{k})^{\top},\sqrt{\beta_1}\|\boldsymbol{\theta}^{k}-\boldsymbol{\theta}^{k-1}\|, \sqrt{\beta_2}\|\boldsymbol{\theta}^{k-1}-\boldsymbol{\theta}^{k-2}\| \right]^{\top}, \notag\\
&\left[(\boldsymbol{\theta}^{k}-\boldsymbol{\theta}^{*})^{\top}, \sqrt{\beta_1}\|\boldsymbol{\theta}^{k}-\boldsymbol{\theta}^{k-1}\|, \sqrt{\beta_2}\|\boldsymbol{\theta}^{k-1}-\boldsymbol{\theta}^{k-2}\| \right]^{\top}\Big\rangle\label{innerproduct}\\
&\le\sqrt{\Big(\|\nabla f(\boldsymbol{\theta}^k)\|^2 + \beta_{1}\|\boldsymbol{\theta}^{k}-\boldsymbol{\theta}^{k-1}\|^{2} + \beta_{2}\|\boldsymbol{\theta}^{k-1}-\boldsymbol{\theta}^{k-2}\|^{2} \Big)}\notag\\
&\sqrt{\Big(\|\boldsymbol{\theta}^k-\boldsymbol{\theta}^*\|^2 + \beta_{1}\|\boldsymbol{\theta}^{k}-\boldsymbol{\theta}^{k-1}\|^{2} + \beta_{2}\|\boldsymbol{\theta}^{k-1}-\boldsymbol{\theta}^{k-2}\|^{2}\Big)}\label{finalpropert12}.
\end{align}
The result in (\ref{convexproperty1}) is obtained using (\ref{conveproer12}). In going from (\ref{innerproduct}) to (\ref{finalpropert12}), we employ the inequality $\langle\boldsymbol{x},\boldsymbol{y}\rangle\le\|\boldsymbol{x}\|\|\boldsymbol{y}\|$. The proof of Lemma \ref{lyapunvfunctinequa} is complete.
\end{proof}

Under {Assumption \ref{OHBassumption1}}, if constants $\alpha$ and $\xi$ are chosen properly so that (\ref{conefpositveLay})--(\ref{conefpositveLayend}) are satisfied with $\sigma_0\sigma_1\sigma_2\ne0$, then the result in (\ref{nonincreasing}) of Lemma \ref{descentLyapunovfunv} indicates that
\begin{align}
&\mathbb{L}^{k+1}-\mathbb{L}^{k} \notag\\ &\leq-\sigma_{0}\left\|\nabla f\left(\boldsymbol{\theta}^{k}\right)\right\|^{2}
-\sigma_{1}\left\|\boldsymbol{\theta}^{k}-\boldsymbol{\theta}^{k-1}\right\|^{2}\notag\\
&\qquad-\sigma_{2}\left\|\boldsymbol{\theta}^{k-1}-\boldsymbol{\theta}^{k-2}\right\|^{2}\\
&\le -\min\{\sigma_0,\frac{\sigma_1}{\beta_1},\frac{\sigma_2}{\beta_2}\}\Big(\left\|\nabla f\left(\boldsymbol{\theta}^{k}\right)\right\|^{2} + \beta_1\left\| \boldsymbol{\theta}^{k}-\boldsymbol{\theta}^{k-1} \right\|^{2} \notag\\
&\qquad+ \beta_2\left\| \boldsymbol{\theta}^{k-1}-\boldsymbol{\theta}^{k-2} \right\|^{2}\Big)\label{decreasLay112}
\end{align}
and $\min\{\sigma_0,\frac{\sigma_1}{\beta_1},\frac{\sigma_2}{\beta_2}\}>0$. The result in (\ref{decreasLay112})
implies
\begin{align}\label{boudnfunctiondiff}
&\left\|\nabla f\left(\boldsymbol{\theta}^{k}\right)\right\|^{2} + \beta_1\left\| \boldsymbol{\theta}^{k}-\boldsymbol{\theta}^{k-1} \right\|^{2} + \beta_2\left\| \boldsymbol{\theta}^{k-1}-\boldsymbol{\theta}^{k-2} \right\|^{2}\notag\\
&\qquad\le\frac{1}{\min\{\sigma_0,\sigma_1/\beta_1,\sigma_2/\beta_2\}}\left( -\mathbb{L}^{k+1}+\mathbb{L}^{k}\right).
\end{align}
Since the objective function $ f(\boldsymbol{\theta})$ is coercive in Assumption \ref{OHBassumption1}, we have $ f(\boldsymbol{\theta}^*)\le f(\boldsymbol{\theta}^k)<+\infty$ and $\|\boldsymbol{\theta}^k\|<+\infty$ which implies
$\left\|\boldsymbol{\theta}^{k}-\boldsymbol{\theta}^{*}\right\|<+\infty$, $\left\|\boldsymbol{\theta}^{k}-\boldsymbol{\theta}^{k-1}\right\|<+\infty$ and $\left\|\boldsymbol{\theta}^{k-1}-\boldsymbol{\theta}^{k-2}\right\|<+\infty$.
Then we obtain that there exists a finite number $N'<\infty$ such that
\begin{align}
\|\boldsymbol{\theta}^k-\boldsymbol{\theta}^*\|^2 + \beta_{1}\|\boldsymbol{\theta}^{k}-\boldsymbol{\theta}^{k-1}\|^{2} +\beta_{2}\|\boldsymbol{\theta}^{k-1}-\boldsymbol{\theta}^{k-2}\|^{2} \le N'. \label{boundfinite}
\end{align}
Plugging (\ref{boudnfunctiondiff}) and (\ref{boundfinite}) into (\ref{finalpropert12}) implies that
\begin{align}
&(\mathbb{L}^{k})^2\notag\\
&\le \Big(\|\nabla f(\boldsymbol{\theta}^k)\|^2 + \beta_{1}\|\boldsymbol{\theta}^{k}-\boldsymbol{\theta}^{k-1}\|^{2} + \beta_{2}\|\boldsymbol{\theta}^{k-1}-\boldsymbol{\theta}^{k-2}\|^{2} \Big)\notag\\
&\qquad\Big(\|\boldsymbol{\theta}^k-\boldsymbol{\theta}^*\|^2 + \beta_{1}\|\boldsymbol{\theta}^{k}-\boldsymbol{\theta}^{k-1}\|^{2} + \beta_{2}\|\boldsymbol{\theta}^{k-1}-\boldsymbol{\theta}^{k-2}\|^{2}\Big)\\
&\le\frac{N'}{\min\{\sigma_0,\sigma_1/\beta_1,\sigma_2/\beta_2\}}\left( \mathbb{L}^{k}-\mathbb{L}^{k+1}\right).\label{desceprotty}
\end{align}
Since Lemma \ref{descentLyapunovfunv} shows that $\mathbb{L}^{k+1}\le\mathbb{L}^{k}$, then we obtain $\mathbb{L}^{k}\mathbb{L}^{k+1}\le(\mathbb{L}^{k})^2$ and  the result in (\ref{desceprotty}) indicates that
\begin{align}
\mathbb{L}^{k}\mathbb{L}^{k+1}\le\frac{N'}{\min\{\sigma_0,\sigma_1/\beta_1,\sigma_2/\beta_2\}}\left( \mathbb{L}^{k}-\mathbb{L}^{k+1}\right)
\end{align}
which is equivalent to
\begin{align}\label{sumLay12}
\frac{1}{\mathbb{L}^{k+1}} - \frac{1}{\mathbb{L}^{k}} \ge\frac{\min\{\sigma_0,\sigma_1/\beta_1,\sigma_2/\beta_2\}}{N'}.
\end{align}
Using (\ref{sumLay12}) yields
\begin{align}\label{}
\frac{1}{\mathbb{L}^{k}}\ge\frac{1}{\mathbb{L}^{k}} - \frac{1}{\mathbb{L}^{0}} \ge\frac{k\min\{\sigma_0,\sigma_1/\beta_1,\sigma_2/\beta_2\}}{N'}.
\end{align}
This completes the proof of Theorem \ref{convextheo}.

\end{proof}

\section{Proof of Theorem \ref{convergenceTheorem3}}\label{convergenonconv}
The derivation in this part is similar to that in \cite{chen2018lag} that does not consider adaptive sparsification and error correction.
The result in Lemma \ref{descentLyapunovfunv} shows that
\begin{align}
&\mathbb{L}^{k+1}-\mathbb{L}^{k} \notag\\ &\leq-\sigma_{0}\left\|\nabla f\left(\boldsymbol{\theta}^{k}\right)\right\|^{2}
-\sigma_{1}\left\|\boldsymbol{\theta}^{k}-\boldsymbol{\theta}^{k-1}\right\|^{2}\notag\\
&\qquad\qquad-\sigma_{2}\left\|\boldsymbol{\theta}^{k-1}-\boldsymbol{\theta}^{k-2}\right\|^{2}\\
&\le -\min\{\sigma_0,\frac{\sigma_1}{\beta_1},\frac{\sigma_2}{\beta_2}\}\Big(\left\|\nabla f\left(\boldsymbol{\theta}^{k}\right)\right\|^{2} + \beta_1\left\| \boldsymbol{\theta}^{k}-\boldsymbol{\theta}^{k-1} \right\|^{2}\notag\\
&\qquad + \beta_2\left\| \boldsymbol{\theta}^{k-1}-\boldsymbol{\theta}^{k-2} \right\|^{2}\Big).\label{decreasLay11}
\end{align}
Then summing over $k'\in\{1,2,...,k\}$, we have
\begin{align}
&\mathbb{L}^{1} - \mathbb{L}^{k+1}  \notag\\ &\ge\min\{\sigma_0,\frac{\sigma_1}{\beta_1},\frac{\sigma_2}{\beta_2}\}\sum_{k'=1}^k\Big(\left\|\nabla f\left(\boldsymbol{\theta}^{k'}\right)\right\|^{2}
 + \beta_1\left\| \boldsymbol{\theta}^{k'}-\boldsymbol{\theta}^{k'-1} \right\|^{2}\notag\\
&\qquad+ \beta_2\left\| \boldsymbol{\theta}^{k'-1}-\boldsymbol{\theta}^{k'-2} \right\|^{2}\Big).
\end{align}
Since we have $\mathbb{L}^{1} - \mathbb{L}^{k+1}\le\mathbb{L}^{1}<\infty$, then we obtain (with $\min\{\sigma_0,{\sigma_1}/{\beta_1},{\sigma_2}/{\beta_2}\}>0$)
\begin{align}
&\lim_{k\rightarrow\infty}\sum_{k'=1}^k\Big(\left\|\nabla f\left(\boldsymbol{\theta}^{k'}\right)\right\|^{2} + \beta_1\left\| \boldsymbol{\theta}^{k'}-\boldsymbol{\theta}^{k'-1} \right\|^{2} \notag\\
&\qquad+ \beta_2\left\| \boldsymbol{\theta}^{k'-1}-\boldsymbol{\theta}^{k'-2} \right\|^{2}\Big)<\infty
\end{align}
which implies
\begin{align}
\lim_{k\rightarrow\infty}\min_{1\le k'\le k}\left\|\nabla f\left(\boldsymbol{\theta}^{k'}\right)\right\|^{2}\rightarrow 0.
\end{align}
\textcolor[rgb]{0.00,0.00,0.00}{Additionally, according to the lemma on the convergence rates of nonnegative summable sequences \cite{chen2018lag,davis2016convergence}, we can also obtain
\begin{align}
\min_{1\le k'\le k}\left\|\nabla f(\boldsymbol{\theta}^{k'})\right\|^{2}=\mathcal{O}(1 / k).
\end{align}} This completes the proof of Theorem \ref{convergenceTheorem3}.

\ifCLASSOPTIONcaptionsoff
  \newpage
\fi



%

\bibliographystyle{IEEEtran}
\bibliography{refs}

%
%

%

%
%
%




\end{document}